\documentclass[letterpaper]{article} 
\usepackage{aaai25}  
\usepackage{times}  
\usepackage{helvet}  
\usepackage{courier}  
\usepackage[hyphens]{url}  
\usepackage{graphicx} 
\urlstyle{rm} 
\usepackage{natbib}  
\usepackage{caption} 
\frenchspacing  
\setlength{\pdfpagewidth}{8.5in}  
\setlength{\pdfpageheight}{11in}  
%

\usepackage{cite}
\usepackage{algorithm}
\usepackage{algorithmic}
\usepackage{amsmath}
\usepackage{appendix}
\usepackage{color}
\usepackage{caption}

%
\usepackage{newfloat}
\usepackage{listings}
\DeclareCaptionStyle{ruled}{labelfont=normalfont,labelsep=colon,strut=off} 
\lstset{%
	basicstyle={\footnotesize\ttfamily},
	numbers=left,numberstyle=\footnotesize,xleftmargin=2em,
	aboveskip=0pt,belowskip=0pt,%
	showstringspaces=false,tabsize=2,breaklines=true}
\floatstyle{ruled}
\newfloat{listing}{tb}{lst}{}
\floatname{listing}{Listing}
%
\pdfinfo{
	/TemplateVersion (2025.1)
}

\setcounter{secnumdepth}{0} 

%


\title{Know2Vec: A Black-Box Proxy for Neural Network Retrieval}
\author{
    Zhuoyi Shang\textsuperscript{\rm 1,2,3}, Yanwei Liu\textsuperscript{\rm 1,3}\thanks{Corresponding authors.}, Jinxia Liu\textsuperscript{\rm 4}, Xiaoyan Gu\textsuperscript{\rm 1,3}, Ying Ding\textsuperscript{\rm 1 3}, Xiangyang Ji\textsuperscript{\rm 5}
}
\affiliations{
    \textsuperscript{\rm 1}Institute of Information Engineering, Chinese Academy of Sciences, Beijing, China   \\
     \textsuperscript{\rm 2}School of Cyber Security, University of Chinese Academy of Sciences, Beijing, China\\
     \textsuperscript{\rm 3}Key Laboratory of Cyberspace Security Defense, Beijing, China \\
     \textsuperscript{\rm 4}College of Information and Intelligence Engineering, Zhejiang Wanli University, Ningbo, China \\
     \textsuperscript{\rm 5}Tsinghua University, Beijing, China


\textbraceleft shangzhuoyi,liuyanwei\textbraceright @iie.ac.cn, liujinxia@zjwu.edu.cn,  \textbraceleft guxiaoyan,dingying\textbraceright @iie.ac.cn, xyji@tsinghua.edu.cn

%
}

\usepackage{bibentry}

\begin{document}
	
	\maketitle
	
	\begin{abstract}
		
		For general users, training a neural network from scratch is usually challenging and labor-intensive. Fortunately, neural network zoos enable them to find a well-performing model for directly use or fine-tuning it in their local environments. Although current model retrieval solutions attempt to convert neural network models into vectors to avoid complex multiple inference processes required for model selection, it is still difficult to choose a suitable model due to inaccurate vectorization and biased correlation alignment between the query dataset and models. From the perspective of knowledge consistency, i.e., whether the knowledge possessed by the model can meet the needs of query tasks,  we propose a model retrieval scheme, named Know2Vec, that acts as a black-box retrieval proxy for model zoo. Know2Vec first accesses to models via a black-box interface in advance, capturing vital decision knowledge from models while ensuring their privacy. Next, it employs an effective encoding technique to transform the knowledge into precise model vectors.  Secondly, it maps the user's query task to a knowledge vector by probing the semantic relationships within query samples. 
		Furthermore, the proxy ensures the knowledge-consistency between query vector and model vectors within their alignment space, which is optimized through the supervised learning with diverse loss functions, and finally it can identify the most suitable model for a given task during the inference stage. Extensive experiments show that our Know2Vec achieves superior retrieval accuracy against the state-of-the-art methods in diverse neural network retrieval tasks.

	\end{abstract}
	\begin{links}
			\link{Code}{https://github.com/vimpire00/know2vec1}
	\end{links}

	\section{Introduction}
	Well-trained models in many domains have demonstrated promising performance in various downstream tasks. The training process refines knowledge from dataset into general rules and patterns, enabling the model to make accurate predictions on new data \cite{KR}. However, their performances vary widely for a targeted downstream application \cite{Spider}. The model whose knowledge is more closely aligned with the task requirements tends to perform better.
	For example, a model with numerical knowledge would find it easier to complete the MNIST\cite{mnist} classification task than a model specialized in flower classification.
	Assessing the suitableness of a Deep learning(DL) model by uploading the entire dataset to the huge model market for comparison against inference results is risky and impractical due to data disclosure and resource constraints. Therefore,  further research is needed to evaluate the correlation between neural network models and query tasks.

	
	Source-Free model transferability estimation (SF-MTE) \cite{HScore,LEEP,Spider} methods are designed to rank the suitability of pre-trained models for fine-tuning in downstream tasks. Traditional methods \cite{HScore,LEEP} directly score the candidate models by utilizing statistical data like features or joint distribution of models and query task. Typically, Model Spider\cite{Spider} vectorizes both neural network models and query tasks  to avoid the high computational costs of forward propagation increased by traditional methods.

	\begin{figure}[t]
		\centering
		\includegraphics[width=0.97\columnwidth]{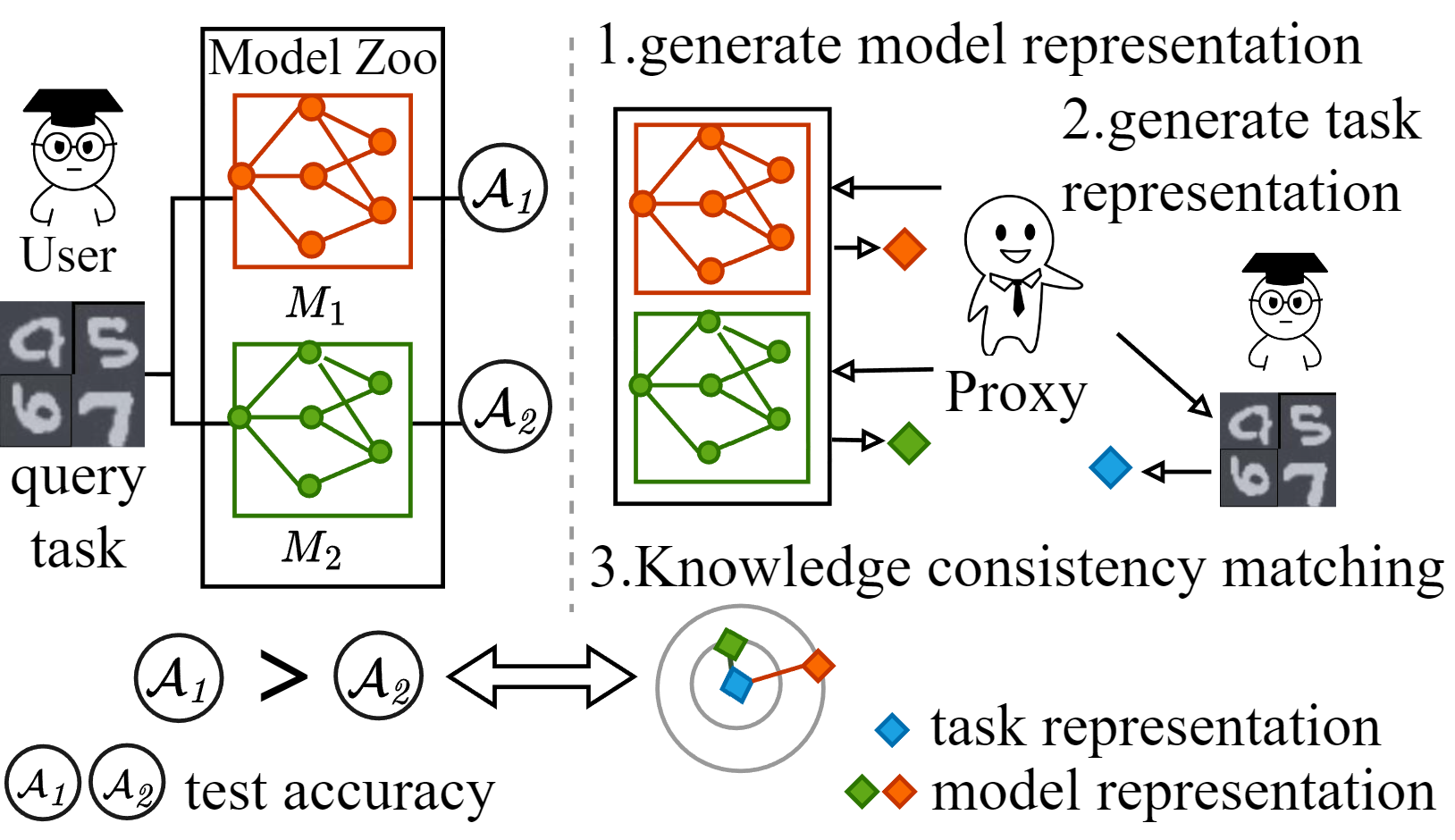} 
		\caption{Knowledge-consistency-based
			black-box proxy for model retrieval.}
		\label{fig1}
	\end{figure}
	
	With the vectorization idea, Neural Network Retrieval (NNR) \cite{TANS,DNNR} tries to transform models and datasets into specific embeddings that facilitates their matching.
	Generating vectors for models and datasets, and calculating their correlations require an accurate understanding of the key knowledge  of both models and query tasks. The pioneering NNR study, DNNR \cite{DNNR} utilizes litmus images to construct models' semantic vectors, while TANS \cite{TANS} further advances the field by searching for a cross-modal space to minimize the semantic discrepancy between model representations and query images. These techniques, while improving retrieval efficiency, still encounter various problems,  such as laborious and rough vector generation process \cite{DNNR,Spider}, imprecise alignment \cite{TANS}, and  the necessity for  privacy protection \cite{Spider,TANS}.  
	
	In particular, the primary challenge of the correlation calculation methods(SF-MTE or NNR) lies in the two aspects as follows. (1) \textbf{Transforming the unstructured nature of neural network models into a vectorial format}, which must capture the intrinsic knowledge in models for effective retrieval.    (2)\textbf{ The establishment of a quantifiable mapping space}, where query vectors align with model representations, ensuring semantically similar vectors are proximate.  Existing methods often rely on complex and suboptimal vector generation processes, failing to fully capture critical model knowledge or achieve seamless alignment.
	
	To address the above challenges, we propose a novel knowledge-consistency-based black-box proxy for model retrieval, named Know2Vec, and it is shown in Fig. \ref{fig1}.
	The objective of Know2Vec is to establish a consistent representation of knowledge, allowing semantic alignment between  the query task and models. Firstly, it abstracts the intrinsic knowledge acquired by neural network models into a generalized representation in a black-box way. Next, it interacts with users to generate effective task representation by understanding the differences between query samples. Lastly, the proxy is designed to perform a knowledge consistency matching between the abstracted model representation and the task representation, facilitating efficient model retrieval.  
	
	Our  key contributions are:
	\begin{itemize}
		\item 
		We propose a model knowledge vectorization scheme for parameter-agnostic scenarios, which is designed to capture the implicit model knowledge and further vectorize it to support accurate model retrieval. We further prove in theory that it is feasible to obtain model information with randomly selected probes. 
		\item  A carefully designed measure function is proposed to align the heterogeneous knowledge embeddings,  which correspond to the knowledge of the query task and those of known models, assisting users in accurately defining their needs and retrieving the most suitable neural network model. 
		\item 
		Know2Vec achieves superior retrieval performance across various NNR tasks, outperforming state-of-the-art baselines in our experiments. Additionly, 
		it accesses neural network models in a black-box manner, eliminating the need to understand internal parameters, thus preserving privacy.

	\end{itemize}
	
	\section{Related Work}
	
	\subsection{Neural Network Retrieval}
	NNR addresses the model selection issue by mapping query entries and neural network models into vectors, enabling users to find a satisfactory pre-trained model from model markets \cite{Learnware}.
	Deep Neural Network Retrieval (DNNR)\cite{DNNR} initially achieves model vectorization through feeding random litmus images to the candidate models. However, it  needs extensive datasets and computational resources, making it impractical for online retrieval. TANS \cite{TANS} aims to align query datasets with similar neural network representations, but it overlooks the subtle differences within categories that are key for aligning knowledge.
	By representing key decision-making knowledge from models without accessing to their internal parameters and aligning it with the need of a query task, our method achieves both privacy protection and precise retrieval goals.
	
	
	
	\subsection{Source-Free Model Transferability Estimation}
	For a given target task and a model library, Source-Free Model Transferability Estimation(SF-MTE) \cite{whichtransfer} aims to propose a metric to quantify the transferability score without the need for individual training. Static SF-MTE methods, such as LEEP\cite{LEEP},
	H-score\cite{HScore},  compute scores directly from statistical data like features and logits. In contrast,  Dynamic SF-MTE methods aim to project static features into tailored spaces to facilitate superior approximation. They try to estimate the maximum average log evidence\cite{LogME}, or they endeavor to identify a model/task vectorlization technique such as the classical Model Spider \cite{Spider}.
	However, despite enhancing computational efficiency to a certain extent, these methods still necessitate a complex training process.

	\subsection{Boundary Supporting Samples}
	Boundary supporting samples are identified as those close to the decision boundary of neural network models.
	Assume the target model is a k-class DNN classifier, where the output layer is an active layer. 
	Formally, we denote by $ \{g_i\}$ the decision functions of the target classifier, 
	and a data point $x$ is on the target classifier’s classification boundary if at least two labels have the largest discrimination probability, i.e., $ g_a(x) = g_b(x) \geq \max\limits_{c \neq a, b} g_c(x)$, where $a,b,c$ are category index, and $g_a(x)$ is the probability that sample $x$ belongs to category $A$ \cite{IPGuard}. 
	Tian et al. \cite{KR} claimed that the knowledge transferred from a training dataset to a DL model
	can be uniquely represented by the model’s decision boundary samples, providing feasibility for us to acquire model knowledge in a black-box setting. However, this method only acquires partial model knowledge and  requires target training dataset, which is illogical in NNR problem. Accordingly, we propose a parameter-agnostic model knowledge vectoring approach without demanding training dataset.
	
	\begin{figure*}[t]
		\centering
		\includegraphics[width=1.8\columnwidth]{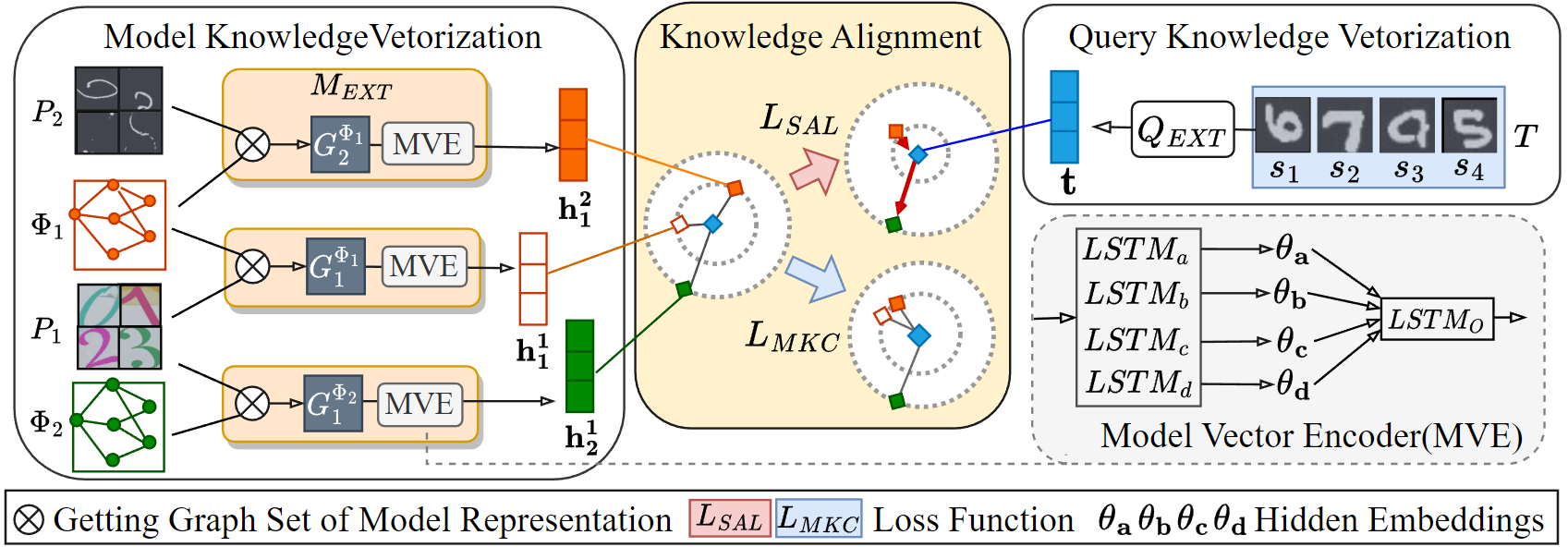} 
		\caption{Model Retrieval Framework. 
		}
		\label{framwork}
		\vspace{-0.4cm}
	\end{figure*}
	
	\section{Method}
	
	\label{method}
	Taking NNR problem as an example, we will elaborate on calculating model-dataset correlation when the model parameters are agnostic. We start by vectorizing models and query tasks, which helps to distill the models' knowledge and clarify the requirements of the tasks. We tackle the new issues that incurred from the limited known information.  
	Next, we seek a knowledge-consistent space that acts as a bridge, which connects the two modalities despite their differences in  structures and semantic parameters, and providing a way to measure their semantic similarity.
	
	\subsection{Problem Formulation}
	\label{formulation}
	We consider an arbitrary query task $T=\{s_i,l_i\}_{i=1}^n$ with $n$ samples $\{s_i\}$ and the corresponding target labels $\{l_i\}$. Given a large model hub $M=\{\Phi_i\}_{i=1}^m$ with a total of $m$ well-trained models, the goal of NNR is to choose 
	a DNN model $\Phi_j$ that performs well on $T$.  We define
	$\mathcal{A}(\Phi_i,T)$ the verification accuracy of $T$ on model $\Phi_i$. Mathematically, NNR aims to search for the best-fitted model $\Phi_j$ that satisfies
	\begin{equation}
		j=\mathop{\arg\max}\limits_{i}  \mathcal{A}(\Phi_i,T)
	\end{equation}
	
	As mentioned earlier, we assume there is a virtually perfect proxy $\mathcal{P}$ that serves as a good communication intermediary: 
	(1) It distills model knowledge and obtains vector $\mathbf{h_i}$ for each candidate model $\Phi_i$; (2) It gets the requirements of query tasks $T$ and generates the corresponding query knowledge vector $\mathbf{t}$; (3) It selects a suitable model $\Phi_j$  through a  semantic measurement $\mathcal{DIS}()$. For an effective NNR method, maximizing  $\mathcal{A}(\Phi_i,T)$  is equivalent to  minimizing $\mathcal{DIS}(\mathbf{h_i},\mathbf{t})$
	\begin{equation}
		\label{eq1} 
		j=\mathop{\arg\max}\limits_{i}  \mathcal{A}(\Phi_i,T)\Leftrightarrow j=\mathop{\arg\min}\limits_{i}  \mathcal{DIS}(\mathbf{h_i},\mathbf{t})
	\end{equation}
	
	Fig.~\ref{framwork} illustrates the  well-designed model retrieval framework. $\mathcal{P}$ includes three components: a model knowledge extractor $M_{EXT}$, a query knowledge extractor $Q_{EXT}$, and a knowledge alignment space with measurement function $\mathcal{DIS}()$.
	Firstly, $M_{EXT}$ vectorizes model knowledge assisted by a series of additional probe datasets, denoted as  $\mathbf{h_i^{i'}}=M_{EXT}(P_{i'},\Phi_i)$, where $P_{i'}$ is the $i'$th probe dataset. Specifically for model $\Phi_i$, $M_{EXT}$ starts by creating a graph set $G_{i'}^{\Phi_i}$ with $P_{i'}$, and then encodes $G_{i'}^{\Phi_i}$  into $\mathbf{h^{i'}_i}$ through a model vector encoder. Next, $Q_{EXT}$ extracts semantic correlations from query task $T$, producing a task knowledge vector $\mathbf{t}=Q_{EXT}(T)$.
	After that, the knowledge alignment space assesses the consistency of knowledge between model vectors $\{\mathbf{h}_i\}$ and query vector $\mathbf{t}$, for selecting the suitable model to the task.
	
	\subsection{Model Knowledge Vectorization}
	\label{modelkn}
	The previous NNR methods attempted to break down candidate model and explore the semantic information through exposed parameters. However, this is laborious and privacy-unfriendly. Fortunately, Theorem \ref{theorem11}  proves that model knowledge that is transferred from the training dataset can be encapsulated by a matrix, denoted as knowledge representation matrix ($KRM$), providing the possibility for more efficient and privacy-preserving model knowledge embedding. Given the neural network $\Phi$ and its centroid samples of training dataset $P$, $KRM$ can be generated based on the model's response to input samples in advance, which only requiring black-box access to $\Phi$.   
	
	$KRM$ is formed by two kind of representative samples: centroid samples and decision boundary samples.  
	As illustrated in Fig.~\ref{fig3}, taking binary classification that contains categories $A$ and $B$ as an example, the transferred knowledge $KRM$ consists of two vectors $\{ \mathbf{r_a^b}=x_a^b-x_a,\mathbf{r_b^a}=x_b^a-x_b  \}$, where $x_a$ and $x_b$ are centroid samples of $A$ and $B$, respectively, $x_a^b$ is the decision boundary sample from $A$ to $B$, and $x_b^a$ is the decision boundary sample from $B$ to $A$. $x_a^b$ can be generated by points that respectively belong to categories $A$ and $B$(such as $x_a,x_b$), and similarly for $x_b^a$. 
	
	Yet even with $KRM$, obtaining an effective representation $\mathbf{h}$ is still challenging. The primary obstacle lies in the design of model knowledge extractor $M_{EXT}$, which is responsible for converting information-limited $KRM$ into measurable vectors to enable model retrieval tasks. These vectors in $KRM$ encapsulates the incomplete decision knowledge of the model.
	Considering the dynamic changes in features, we further collect the information both in $KRM$ and representative samples as a graph set and design a specialized DL framework to generate  $\mathbf{h}$.
	
	Furthermore, obtaining  central samples is almost impossible because model owners tend to withhold their training datasets due to privacy concerns or copyright restrictions, which complicating the $KRM$ generation process. We solve this issue by proving that alternative datasets can effectively generate a model's knowledge representation vector.
	
	
	
	
	
	\newtheorem{theorem}{Theorem} 
	\begin{theorem}
		\label{theorem11}  
		\cite{KR}
		The knowledge transferred from a training dataset to a deep learning model can be represented by the knowledge representation matrix $KRM$ formed by perturbation vectors across different classes. For a k-class classifier, let the centroid sample of category $A$ be denoted as $x_a$, the perturbation vector $\mathbf{r_a^k}=x_a^k-x_a$ from category $A$ to category $K$ is defined as the offsets between  $x_a$ and $x_a^k$, where $x_a^k$ is the boundary sample from category $A$ to category $K$.   Collectively,
		$KRM$ is defined by:
		\begin{align}
			\small
			KRM=
			\begin{bmatrix}
				\mathbf{0} & \mathbf{r_{a}^b} & ... &  \mathbf{r_{a}^k}\\
				\mathbf{r_{b}^a} & \mathbf{0}&... & \mathbf{r_{b}^k}\\
				.. & ...  & ..  & \mathbf{r_{c}^k} \\
				\mathbf{r_{k}^a}  & \mathbf{r_{k}^b}  &...   & \mathbf{0} \\  
			\end{bmatrix}
			\label{eqkrm}
		\end{align}    
	\end{theorem}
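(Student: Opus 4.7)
The plan is to establish the theorem in three conceptual steps: first that a classifier's ``knowledge'' can be operationalized as the partition of input space it induces; second that this partition is pinned down by its pairwise decision boundaries, each of which admits a compact representation by a perturbation vector anchored at a class centroid; and third that collecting these vectors into matrix form yields $KRM$ and loses no essential information.

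First, I would formalize what ``transferred knowledge'' means. Since two models that produce identical predictions on all inputs are indistinguishable for the classification task, it suffices to characterize the decision regions $R_a = \{x : g_a(x) > g_c(x),\ \forall c\neq a\}$. These regions are delimited exactly by the pairwise boundaries $B_{ab} = \{x : g_a(x)=g_b(x)\geq \max_{c\neq a,b} g_c(x)\}$ introduced in the \textbf{Boundary Supporting Samples} subsection. Hence representing the model's knowledge reduces to representing the family $\{B_{ab}\}_{a\neq b}$.

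Second, I would argue that each $B_{ab}$ can be faithfully encoded by the perturbation vector $\mathbf{r_a^b}=x_a^b - x_a$. The centroid $x_a$ serves as the canonical anchor of class $A$ (the mode of the training distribution restricted to $A$), and under a local smoothness assumption on $g_a - g_b$ near $x_a^b$, the single displacement $\mathbf{r_a^b}$ captures both the direction and the magnitude required to traverse from a typical class-$A$ input to the $A/B$ boundary. Pairing $\mathbf{r_a^b}$ with the symmetric vector $\mathbf{r_b^a}$ anchors the boundary from both sides, so that $B_{ab}$ is recoverable (locally) from the two anchors and their perturbations. Aggregating over all ordered pairs $(a,b)$ with $a\neq b$, and placing zero vectors on the diagonal since no perturbation is needed to remain within one's own class, yields precisely the matrix displayed in~\eqref{eqkrm}.

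The main obstacle I anticipate is the \emph{sufficiency} claim: a finite set of perturbation vectors only witnesses the decision boundary at isolated points, whereas ``knowledge'' is a global property of $\Phi$. To bridge this gap I would invoke two reductions. (i) Around each centroid $x_a$, the difference $g_a - g_b$ is approximately linear, so $\mathbf{r_a^b}$ acts as a first-order descriptor of $B_{ab}$ in the neighborhood where class-$A$ mass concentrates. (ii) The knowledge transferred during training is driven predominantly by the empirical distribution, which concentrates near the class centroids; therefore faithful local encoding near the centroids is an adequate proxy for the model's global decision behavior on in-distribution inputs. Since the theorem is restated from \cite{KR}, I would defer the finer measure-theoretic details to the original source, keeping the proof here at the level of the three reductions above so that the subsequent construction of $M_{EXT}$ and the graph-set encoding in Section~\ref{modelkn} rests on an explicitly justified foundation.
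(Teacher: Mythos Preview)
The paper does not prove this theorem. Theorem~\ref{theorem11} is stated with the citation \cite{KR} and is invoked as a result from prior work; no proof or proof sketch of it appears anywhere in the paper (the only proof environment in the body and the appendix is for Lemma~\ref{lemma1}). Consequently there is no ``paper's own proof'' against which your proposal can be compared. You in fact recognized this yourself in the last paragraph when you wrote that the theorem is restated from \cite{KR} and that you would defer the finer details to the original source.

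On the substance of your sketch: steps one and two (identifying transferred knowledge with the induced partition, and reducing the partition to the family of pairwise boundaries $B_{ab}$) are reasonable and match the way the paper \emph{uses} the theorem. The real load-bearing claim is step three, that a single displacement $\mathbf{r_a^b}$ ``faithfully encodes'' the codimension-one set $B_{ab}$. Your local-linearity reduction (i) would at best give a first-order tangent hyperplane to $B_{ab}$ at $x_a^b$, not the boundary itself, and reduction (ii) is a heuristic about where training mass concentrates rather than a statement about the model. So even granting your assumptions, what you recover from $KRM$ is a local linearization of each pairwise boundary near one point, not the decision partition; the gap between ``represents'' in the theorem statement and ``locally linearizes'' in your argument is exactly the part that has to come from \cite{KR}. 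Since the present paper treats the theorem as an imported black box and builds $M_{EXT}$ on top of it, your deferral is the appropriate move here; just be explicit that your three steps constitute motivation for the $KRM$ construction rather than a self-contained proof.
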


	\begin{figure}[t]
		\centering
		\includegraphics[width=0.6\columnwidth]{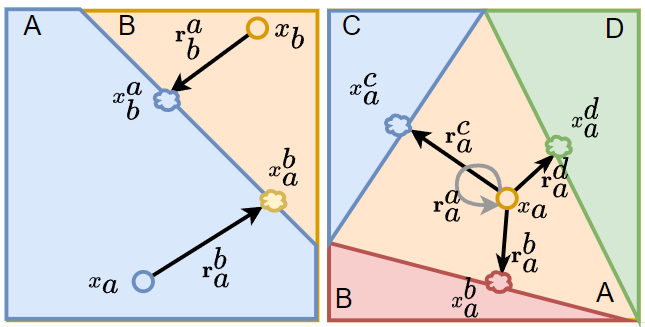} 
		\vspace{-0.25cm}
		\caption{Decision Boundary Sample.}
		\label{fig3}
		\vspace{-0.4cm}
	\end{figure}
	
	\subsubsection{Getting Graph Set of Model Representation}  
	Relying solely on $KRM$ to obtain model knowledge may lead to information loss since 
	the knowledge transfer vector $\mathbf{r_b^a}=x_b^a-x_b$ overlooks crucial details such as the starting point $x_b$ and ending point $x_{b}^a$. The central sample holds key feature about its category, while the boundary samples imply  transition features between categories. 
	First, it's difficult to pinpoint exactly how features changed as the sample moves from  $x_b$ to $x_{b}^a$ within  $\mathbf{r_b^a}$. Second, $KRM$  offers limited insight of the  distinctive  intra-class knowledge.
	
	As illustrated in Fig.~\ref{fig3}, for a classification model $\Phi_i$ with  4 categories, the centroid sample $x_a$ of category $A$   is interconnected with boundary samples $\overline{x_a} = \{ x_a^b, x_a^c,x_a^d\}$, forming a directed graph structure. Within this structure, the directed edges $\{ \mathbf{r_{a}^b},\mathbf{r_{a}^c},\mathbf{r_{a}^d}\}$ represent the specific connections of $x_a$.  This graph is formally defined as  $G_a=\{x_a,\overline{x_a},\{ \mathbf{r_{a}^b},\mathbf{r_{a}^c},\mathbf{r_{a}^d}\} \}$.  Among them,  $\overline{x_a}$ and $x_a$ are two different types of nodes.  Expand to other categories, a total of 4 sets of such connection relationships can be modeled: $G^{\Phi_i}=\{ G_a,G_b,G_c,G_d \}$. Undoubtedly, $G^{\Phi_i}$ offers a richer semantic representation than $KRM$.  
	%
	
	$G^{\Phi_i}$ implicitly links $G_a,G_b,G_c,G_d$ through relationships between categories. 
	Specifically, the information of category $A$  can be obtained from  these three types of nodes: (1) The central sample $x_a$ which embodies the unique features about $A$; (2) The boundary samples $\overline{x_a}$ from  $A$ to other categories, and they explain which features need to change for the transition from $A$ to other categories; (3) Boundary samples $x_b^a, x_c^a, x_d^a $ from other categories to $A$ that suggest why the model might incorrectly classify as $A$. These points are present in $G_b, G_c,G_d$.   By encoding all nodes in $G^{\Phi_i}$ through  inter-category relationships, we facilitate a transformation from  $G^{\Phi_i}$ into the model knowledge vector $\mathbf{h}$.
	
	\subsubsection{Implementation of Model Vector Encoder} 
	We consider the relationships as the dependencies of sequential data, in which each sequence corresponds to one category.  As shown in Fig.~\ref{framwork}, the model vector encoder is implemented with an inner-outer encoder.  The inner encoder processes individual subgraphs, while the outer encoder integrates these subgraphs. Both are completed by a bidirectional Long Short Term Memory(LSTM)\cite{LSTM} network to handle variable long term dependencies. For category $A$, the information from the above-mentioned first two types of nodes (1) and (2) has been successfully encoded to
	the hidden embeddings $\theta_a$ by inner layer $LSTM_a$, as mentioned in Eq.~\ref{eq10}. Additionally, the embeddings $\theta_b,\theta_c,\theta_d$ already include information from the third type of nodes (3). These embeddings are further aggregated as the model knowledge vector $\mathbf{h}$ by the outer-layer $LSTM_O$, aligned through sequence correspondence. 
	\begin{equation} 
		\begin{split} 
			\mathbf{\theta_a} &= LSTM_a(x_a,x_a^b,x_a^c,x_a^d; W, b) \\
			\mathbf{\theta_b} &= LSTM_b(x_b^a,x_b,x_b^c,x_b^d; W, b) \\
			\mathbf{\theta_c} &= LSTM_c(x_c^a,x_c^b,x_c,x_c^d; W, b) \\
			\mathbf{\theta_d} &= LSTM_d(x_d^a,x_d^b,x_d^c,x_d; W, b) \\
			\mathbf{h_i} &= LSTM_O(\mathbf{\theta_a},\mathbf{\theta_b},\mathbf{\theta_c},\mathbf{\theta_d}; W, b) 
			\label{eq10}
		\end{split}
	\end{equation} 
	where $W,b$ are the optimizable parameters.
	
	Thus,  $M_{EXT}$ has encoded $G^\Phi$ into vector $\mathbf{h}$,  as the bidirectional network ensures all edges  in  $G^\Phi$ are reachable, either directly or indirectly.
	
	\subsubsection{Using probe datasets instead of training datasets} Access to the training dataset of a neural network is sometimes impractical, but we can still use other data to probe and obtain boundary samples. Lemma \ref{lemma1} theoretically proves the feasibility that  we can still get the semantic relationships of $G^{\Phi_i}$ with probe samples.
	
	\newtheorem{lemma}{Lemma}
	\begin{lemma}
		\label{lemma1}
		The perturbation vectors in $KRM$ can also be obtained from the target model with associating the external  datasets.
	\end{lemma}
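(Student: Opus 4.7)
The plan is to show that each perturbation vector $\mathbf{r}_a^b$ is a property of the model's decision functions $\{g_i\}$ rather than of any specific training set, so any probe dataset that $\Phi$ can classify into the relevant categories is sufficient to recover it. I would start by recalling that a decision-boundary sample $x_a^b$ is characterized purely by the condition $g_a(x)=g_b(x)\geq\max_{c\neq a,b} g_c(x)$, which involves only $\Phi$. Consequently, the perturbation vector $\mathbf{r}_a^b=x_a^b-x_a$ depends on the training data only insofar as $x_a$ is picked as a class-$A$ anchor; any other point that $\Phi$ assigns to class $A$ can play the same role up to a translation along the $A$-region.

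Next I would describe the constructive procedure using an external probe dataset $P$. Feeding $P$ through $\Phi$ yields a partition into pseudo-classes $\{P_a\}_a$ based on the predicted label $\arg\max_i g_i(\cdot)$. Within each non-empty $P_a$ one selects a surrogate centroid $\tilde{x}_a$ (for instance, by averaging the elements of $P_a$ or by picking the element with maximal $g_a$-confidence). Given $\tilde{x}_a\in P_a$ and any $\tilde{x}_b\in P_b$, the line segment $\{(1-\lambda)\tilde{x}_a+\lambda\tilde{x}_b:\lambda\in[0,1]\}$ satisfies $(g_a-g_b)(\tilde{x}_a)>0$ and $(g_a-g_b)(\tilde{x}_b)<0$; by continuity of the softmax outputs and the intermediate value theorem, there exists $\lambda^\star$ such that $g_a=g_b$ at $\tilde{x}_a^b:=(1-\lambda^\star)\tilde{x}_a+\lambda^\star\tilde{x}_b$. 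A short local search (black-box bisection on $g_a-g_b$) enforces the maximum condition $g_a=g_b\geq\max_{c\neq a,b} g_c$, yielding a genuine boundary sample. Setting $\tilde{\mathbf{r}}_a^b=\tilde{x}_a^b-\tilde{x}_a$ and repeating over all ordered pairs of categories populates the $KRM$ of Eq.~\ref{eqkrm} using only $P$ and black-box queries to $\Phi$.

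Finally I would argue correctness: because $\tilde{x}_a^b$ satisfies the same boundary condition that defines $x_a^b$, $\tilde{\mathbf{r}}_a^b$ encodes the same geometric transition from region $A$ to region $B$ as the training-set version, differing only by a translation within region $A$. Since the encoder in Eq.~\ref{eq10} aggregates relative directions between anchors and boundary hits, the resulting model representation $\mathbf{h}$ is invariant (up to a controlled perturbation) to the replacement of training centroids by probe-based centroids, which is exactly the claim of the lemma.

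The main obstacle I anticipate is handling categories for which $P$ contains no sample that $\Phi$ classifies into them, making some $P_a$ empty and blocking the line-search. I would address this in two ways: (i) appeal to the fact that for any active-output classifier the decision region of each class is a non-empty open set, so enlarging or randomizing $P$ will populate every $P_a$ with probability tending to one; (ii) when enlargement is impractical, run a black-box adversarial search (for example, finite-difference NES on $g_a$) starting from an arbitrary probe point until $\Phi$ reclassifies it into class $a$, producing a surrogate anchor for the missing pseudo-class. Either remedy restores the hypotheses of the constructive argument, after which the rest of the proof proceeds as above.
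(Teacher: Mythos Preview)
Your argument is correct and reaches the same conclusion, but by a genuinely different route than the paper. The paper works analytically: it models $\Phi(x)=\delta(\mathbf{w}\ast x+\mathbf{b})$ with $\delta=g_A-g_B$, posits probe samples $z_a,z_b,z_a^b$ whose $\delta$-values differ from those of the training-based $x_a,x_b,x_a^b$ by small $\lambda_i$, and then invokes the Mean Value Theorem on the differentiable $\delta$ to conclude there exist offsets $\sigma,\sigma_a$ with $z_a^b=x_a^b+\sigma$ and $x_a=z_a+\sigma_a$, whence $\mathbf{r}_a^b=z_a^b-z_a+\sigma+\sigma_a$. In other words, the paper shows the training-based perturbation vector is the probe-based one plus bounded corrections. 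You instead give a constructive argument: partition the probe set by predicted label, pick surrogate anchors, and use the Intermediate Value Theorem along the segment between two anchors to locate a boundary point by black-box bisection on $g_a-g_b$. Your approach buys an explicit algorithm and handles the practical failure mode of empty pseudo-classes, neither of which the paper addresses; the paper's approach buys a cleaner statement of how the probe-based and training-based vectors are related (via additive offsets controlled by the $\lambda_i$), which more directly justifies treating them as interchangeable inputs to $M_{EXT}$. Your final paragraph about invariance of $\mathbf{h}$ under Eq.~\ref{eq10} slightly overreaches the lemma's claim, but it does no harm.
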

	\newtheorem{proof}{Proof}
	\begin{proof}
		Taking binary classification with categories $A$ and $B$ as an example,
		we consider a neural network model $\Phi$ as
		\begin{equation}
			\small
			\Phi(x)=\delta(\mathbf{w}*x+\mathbf{b})
		\end{equation}
		where $\delta$ is the active function, $\mathbf{w}$ and $\mathbf{b}$ are the weights and biases, respectively,  $x$ is any input sample, and $*$ denotes multiplication between vectors. 
		
		$\delta$ is composed of $g_A$ and $g_B$, where $g_A(x)$ is the probability that sample $x$ belongs to category $A$, and similarly $g_B(x)$ for category $B$. For the convenience of narration, it may be helpful to set $\delta=g_A-g_B$.
		Assuming that the centroid samples of training dataset for $\Phi$ are $x_a$ and $x_b$, the boundary sample from $A$ to $B$ is $x_a^b$ ,
		then 
		\begin{align}
			\small
			\delta(\mathbf{w}*x_a^b+\mathbf{b})=0\\
			\delta(\mathbf{w}*x_a+\mathbf{b})=1\\
			\delta(\mathbf{w}*x_b+\mathbf{b})=-1
		\end{align}
		There must exist two selected samples $z_a$ and $z_b$ that satisfy $\delta(\mathbf{w}*z_a+b)=1-\lambda_1,\delta(\mathbf{w}*z_b+b)=-1+\lambda_2$, where  $\lambda_1,\lambda_2$ are very small values that can be ignored. Correspondingly, a boundary sample $z_a^b$ from $A$ to $B$ 
		satisfies $\delta(\mathbf{w}*z_a^b+b)=0-\lambda_3$, with $\lambda_3$  
		being a very small value.
		Then, 
		\begin{align}
			\small
			\delta(\mathbf{w}*x_a^b+\mathbf{b})-\delta(\mathbf{w}*z_a^b+\mathbf{b})=\lambda_3\\
			\delta(\mathbf{w}*x_a+\mathbf{b})-\delta(\mathbf{w}*z_a+\mathbf{b})=\lambda_1 \\
			\delta(\mathbf{w}*x_b+\mathbf{b})-\delta(\mathbf{w}*z_b+\mathbf{b})=\lambda_2 
		\end{align}
		Since $\delta$ is continuous and differential in the regions of interest, there exists a value $\sigma$ that satisfies $z_a^b=x_a^b+\sigma$ due to the Mean Value Theorem. Similarly, there must also be a disturbance $\sigma_a$ such that $x_a=z_a+\sigma_a$.
		
		Therefore, the  perturbation vector $\mathbf{r_a^b}=x_a^b-x_a$ can also alternatively represented by Eq.(\ref{eqr}), 
		where $\sigma$ and $\sigma_a$ are the offsets.
		\begin{equation}
			\small
			\label{eqr}
			\mathbf{r_a^b}=z_a^b-z_a+\sigma+\sigma_a
		\end{equation}
		
		For a fixed model $\Phi$,  $x_a$ and $x_a^b$ are unique, and also $\sigma$ and $\sigma_a$ are only related to $z_a$ and $z_a^b$, respectively. Therefore,
		$\mathbf{r_a^b}$ can be represented by $z_a$ and $z_a^b$. This principle is applicable to other vectors in $KRM$, and the proven conclusion can be extended to other classification models.
		
	\end{proof}
	
	
	\subsection{Query Knowledge Vectorization}
	For the query task $T=\{s_i,l_i\}_{i=1}^n$, we implement the query encoder $Q_{EXT}$  to discern correlations both within and across categories within the query samples. We first average the samples of each class to identify features unique to that class, denoted as $\mathbf{\theta_k}$ for class $k$, then we feed these features of distinct classes as separate sequences into a bidirectional LSTM-based network $LSTM_t$ to investigate how they relate to one another, as detailed in the following equation,
	\begin{equation}
		\small
		\begin{split} 
			\mathbf{\theta_k}= \frac{1}{| \mathbf{I}(l_i = k)|} \sum [s_i *  \mathbf{I}(l_i = k)] \\
			\mathbf{t}=LSTM_t(\mathbf{\theta_1},\mathbf{\theta_2},...,\mathbf{\theta_k};W,b)\\
		\end{split} 
	\end{equation}
	where $k$ is the category index, $| \mathbf{I}(l_i = k)|$ is the number of class $k$, $ \mathbf{I}()$ is one if the logical expression in the bracket is true, otherwise is zero.  
	

	\subsection{Knowledge Alignment}
	\label{knalign}
	We develop an effective loss function that encourages the alignment between model embedding $\mathbf{h}$ and task embedding $\mathbf{t}$ within our retrieval proxy, enabling knowledge-consistent model retrieval.
	As shown in Fig. \ref{framwork},
	a model embedding consistency function $L_{MKC}$ is used to encourage neural networks to overcome the noise caused by external datasets, and a spatial alignment loss function $L_{SAL}$ is used to overcome various biases between  $\mathbf{h}$ and $\mathbf{t}$.
	
	\textit{Model Embedding Consistency Loss.}
	Assuming  $\mathbf{h_i^{i'}}$ 
	is the generated embedding of $\Phi_i$ by using probe dataset $P_{i'}$,  and it contains both the model's inherent knowledge and noise from  $P_{i'}$. To address this, a category loss $L_{MKC}$ is used to incentivize $M_{EXT}$ to learn the knowledge specific to $\Phi_i$, using a distinct index $i$ for each model as the training label,
	\begin{equation}
		L_{MKC}=CE(\mathbf{h_i^{i'}},i)
	\end{equation} 
	where $CE$ is the well-established cross-entropy loss function\cite{croloss}. 
	
	\textit{Spatial Alignment Loss.}
	After vectorization, there are still semantic and mapping space bias between $\mathbf{h}$ and  $\mathbf{t}$. $\mathbf{h}$ is encoded from two types of samples, while $\mathbf{t}$ aggregates the features of each category, that leads to semantic differences. These differences in mapping space due to $M_{EXT}$ and $Q_{EXT}$ further contribute to the alignment biases. To suppress the biases, we characterize the knowledge consistency with cosine similarity  incorporating a margin of 0.4.
	\begin{equation}
		\small
		L_{SAL}(\mathbf{t_i},\mathbf{h_j}) =
		\begin{cases}
			1 - cos(\mathbf{t_i},\mathbf{h_j}), & \text{if } i = j \\
			\max(0, cos(\mathbf{t_i},\mathbf{h_j}) - \text{0.4}), & \text{else}
		\end{cases}
	\end{equation}
	
	where $i$ and $j$ are the indexes of the query task and candidate model, respectively, $cos$ is the cosine distance.  
	Therefore, the final objective function is defined as follows:
	\begin{equation}
		L= L_{MKC}+\alpha \cdot L_{SAL}
		\vspace{-0.1cm}
	\end{equation}
	where $\alpha$ is a is a constant parameter to balance the different losses, and it is set to 1 in our experiment.
	
	Finally in the well-established knowledge alignment space after training,
	the model index $j$ with the strongest semantic
	correlation between $\mathbf{t}$  and candidate model embeddings $\{\mathbf{h_i}\}_{i=1}^m$ can be obtained by the semantic measurement $\mathcal{DIS}()$, which is implemented by the cosine distance.
	\begin{equation}
		j=\mathop{\arg\min}\limits_{i} \mathcal{DIS}(\mathbf{t},\mathbf{h_i})
	\end{equation}
	
	\section{Experiments}
	We compare our Know2Vec with several state-of-the-art methods in two scenarios: NNR and SF-MTE. There are four groups of comparison methods.
	\label{sec:reference_examples}
	\begin{itemize}
		\item \textbf{Statistical SF-MTE methods:} H-Score~\cite{HScore}, NCE~\cite{NCE}, Leep~\cite{LEEP}, NLeep~\cite{NLEEP}, and LFC~\cite{LFC}.
		\item \textbf{Dynamic SF-MTE methods:} LogME~\cite{LogME} and Model Spider~\cite{Spider}.
		\item \textbf{General NNR methods:} TANS~\cite{TANS} and DNNR~\cite{DNNR}. Since DNNR requires to train a considerable neural network model for querying data, we do not compare with it.
		\item \textbf{Universal Large language models (LLMs)}:
		We also  examine GPT-4~\cite{gpt4} and Gemini~\cite{gemini} due to their powerful generation capability.
	\end{itemize}
	
	\begin{table}[!h]
		\centering
		\small
		\setlength{\tabcolsep}{1.2mm} 

		\label{haccnnr}
		\begin{tabular}{c|c|c|c|c|c|c}
				\hline
				\centering
				&R@1 &R@3 &V. Acc &Ft. Acc & Time &  Pri.\\
				\hline
				H-Score &3.02 &7.76&29.07&58.94 &23.21 &$\gamma$\\
				NCE &91.81  & $\mathbf{100}$ &94.03& 90.22 &10.09 & $\gamma$\\
				Leep & 93.10 &$\mathbf{100}$ &94.33 &91.66   &11.28 &$\gamma\gamma$\\
				NLeep&75.86 & 92.24 &83.84 &85.99  &10.60 &$\gamma$$\gamma$\\
				LFC &91.38 &$\mathbf{100}$ & 92.79 & 90.25&10.03 &$\gamma$$\gamma$ \\
				\hline
				LogME &50.43 &62.93 &64.68 & 77.30 &11.32 &$\gamma$$\gamma$\\
				Model Spider& 3.87&5.60&27.23&39.18& 4.28 &$\gamma$$\gamma$\\
				\hline
				TANS&82.75 & $\mathbf{100}$ & 93.70& 94.22 & $\mathbf{\leq 0.1}$ &$\gamma$ \\
				Ours & $\mathbf{94.82}$ &$\mathbf{100}$ &$\mathbf{94.87}$  &$\mathbf{95.67}$  & $\mathbf{\leq 0.1}$ &$\gamma$$\gamma$$\gamma$  \\
				\hline
				GPT-4&-&-&-& 46.37&34.48&$\gamma$$\gamma$$\gamma$\\
				Gemini& -&-&-& 33.87& 70.93&$\gamma$$\gamma$$\gamma$\\
				\hline
			\end{tabular}
			\caption{Performance comparison of NNR tasks. 
			}
			\label{haccnnr}
		\end{table}

		\begin{figure*}[!h]
			\label{visper}
			\centering
			\begin{minipage}{\linewidth}
				\begin{minipage}{0.14\linewidth}
					\centering
					\includegraphics[width=\textwidth]{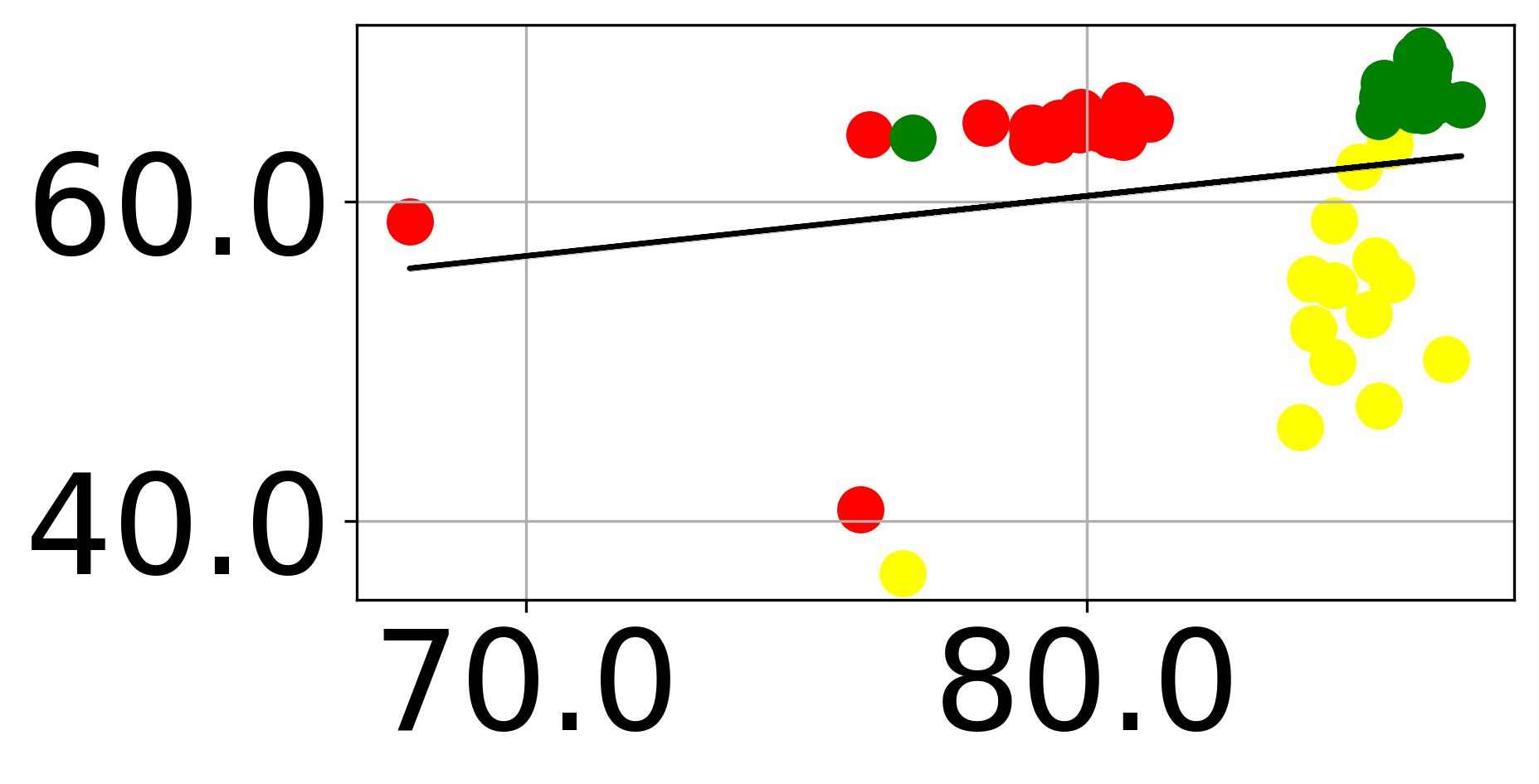}
					\vspace{0.2cm}
					\label{}
				\end{minipage}
				\begin{minipage}{0.14\linewidth}
					\centering
					\includegraphics[width=\textwidth]{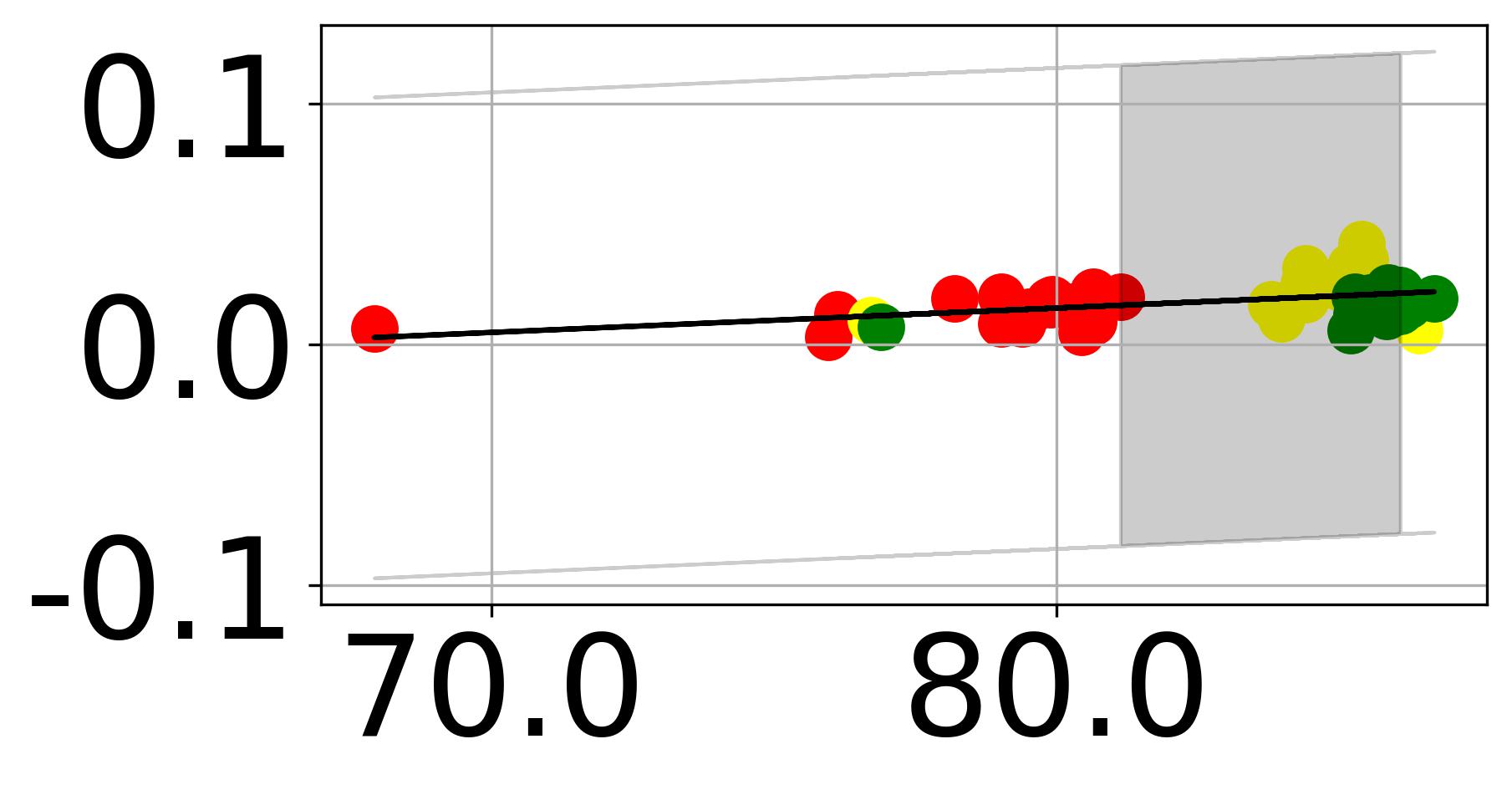}
					\vspace{0.2cm}
				\end{minipage}
				\begin{minipage}{0.14\linewidth}
					\centering
					\includegraphics[width=\textwidth]{
						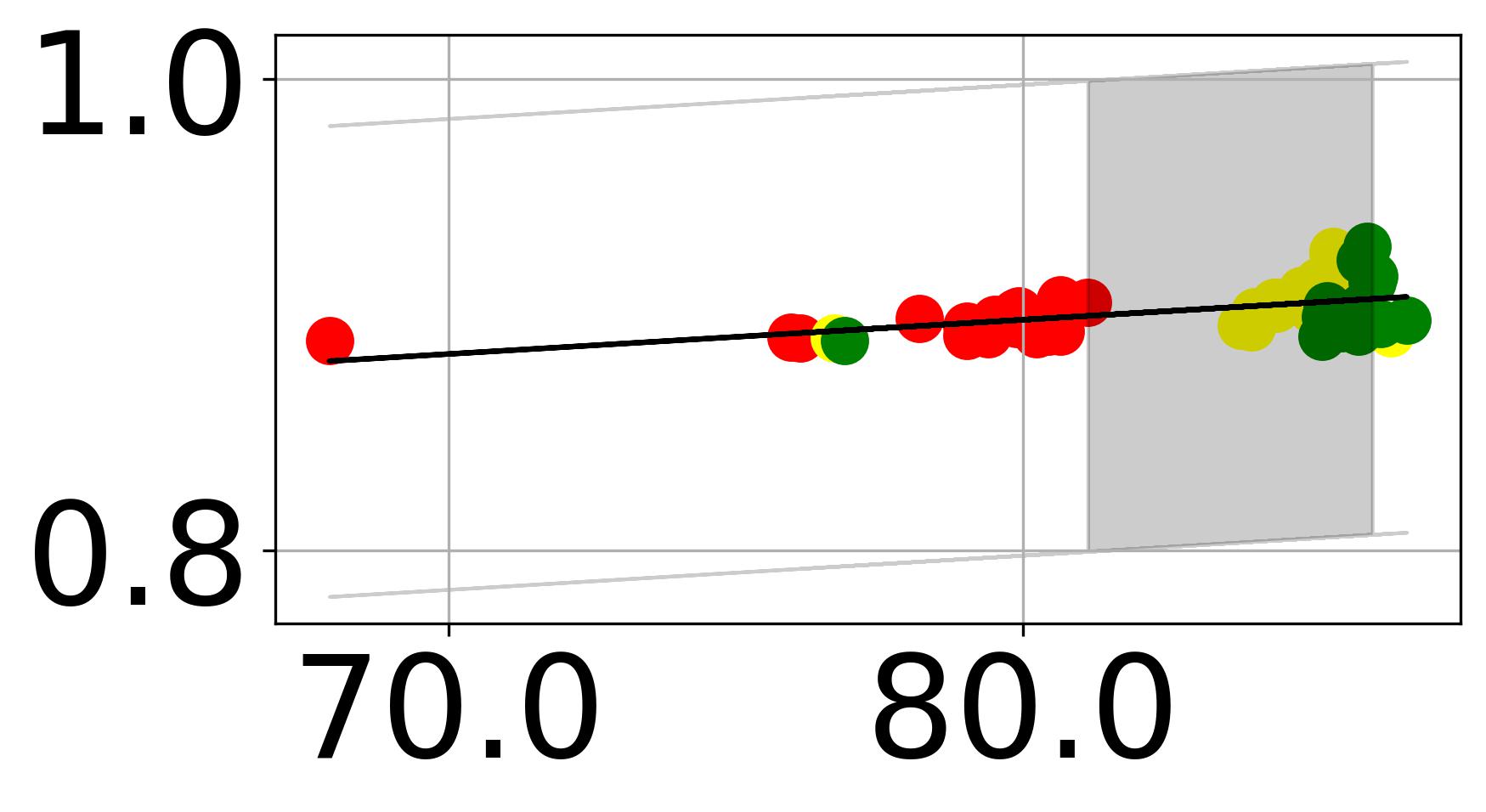
						}
						\vspace{0.2cm}
					\end{minipage}
					\begin{minipage}{0.14\linewidth}
						\centering
						\includegraphics[width=\textwidth]{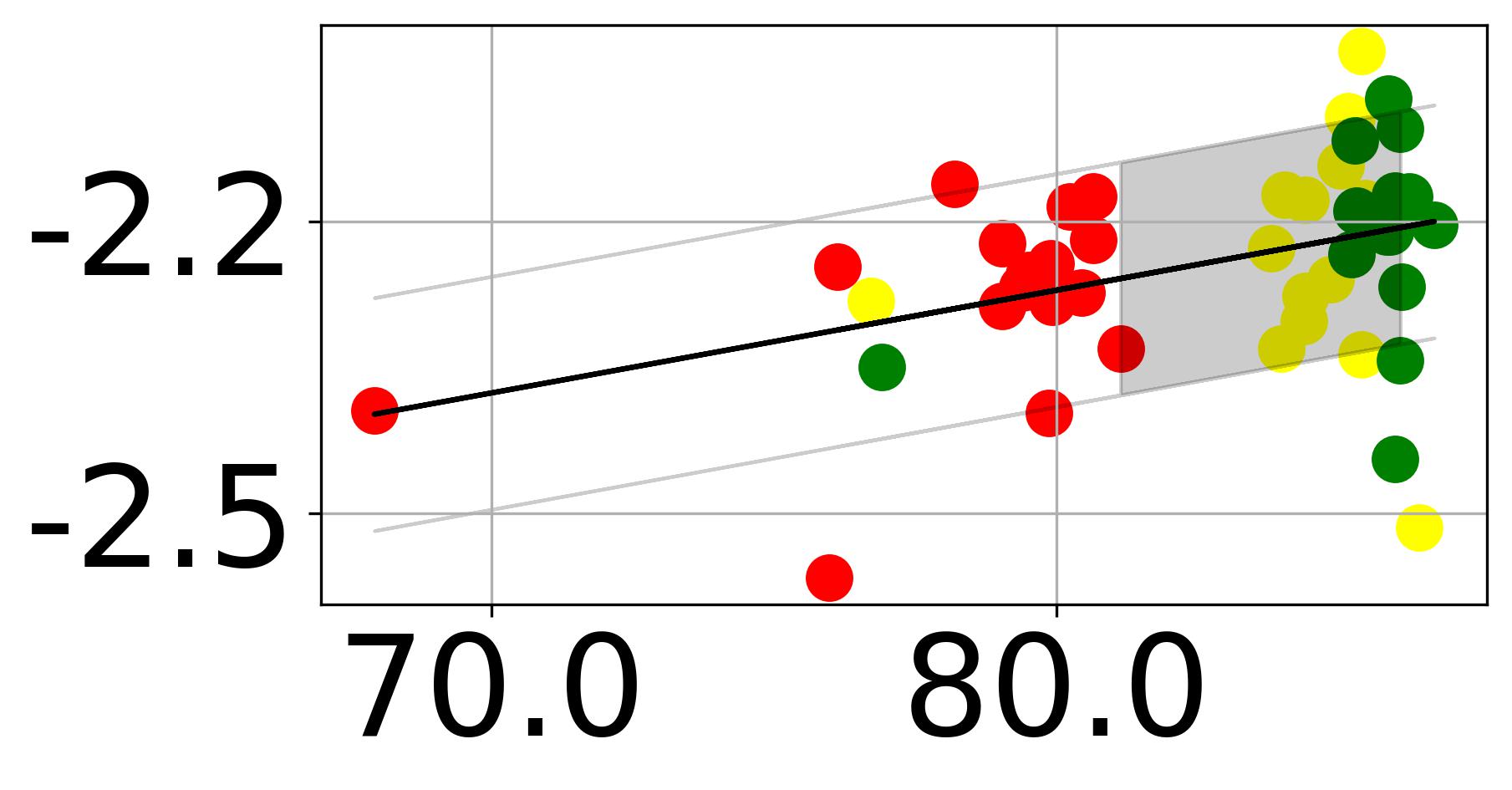
							}
							\vspace{0.2cm}
						\end{minipage}
						\begin{minipage}{0.14\linewidth}
							\centering
							\includegraphics[width=\textwidth]{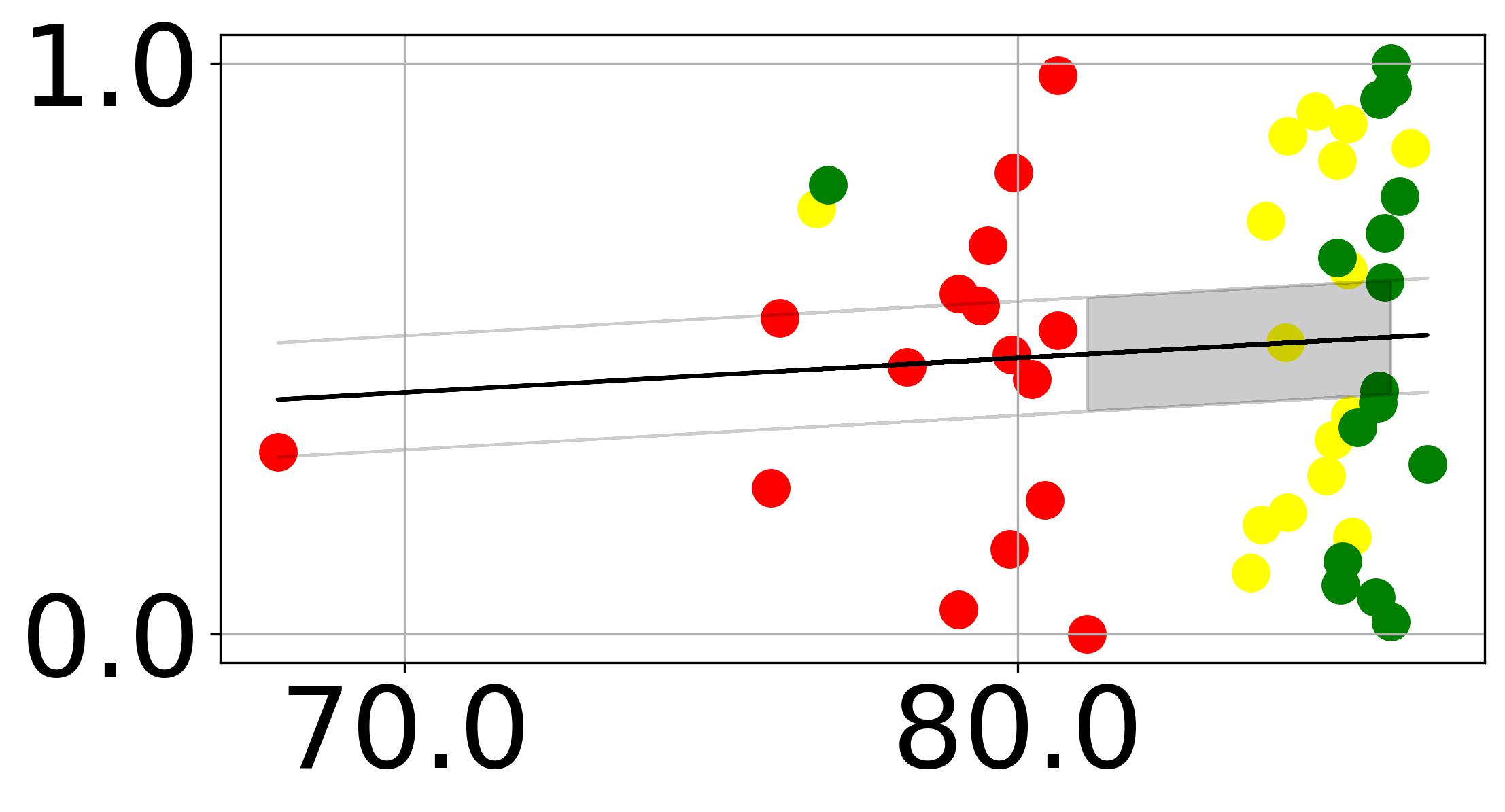}
							\vspace{0.2cm}
						\end{minipage}
						\begin{minipage}{0.14\linewidth}
							\centering
							\includegraphics[width=\textwidth]{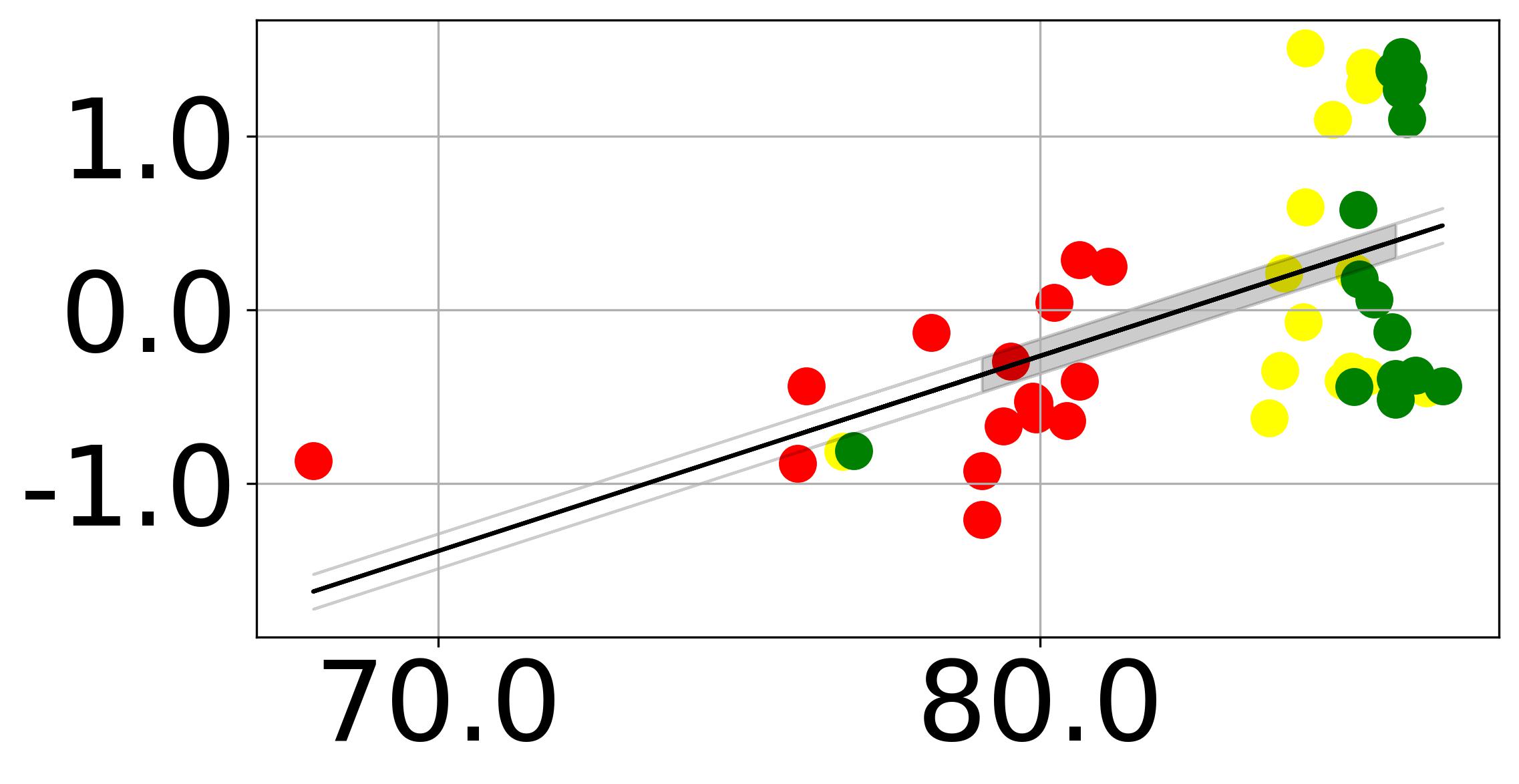}
							\vspace{0.2cm}
						\end{minipage}
						\begin{minipage}{0.13\linewidth}
							\centering
							\includegraphics[width=0.9\textwidth]{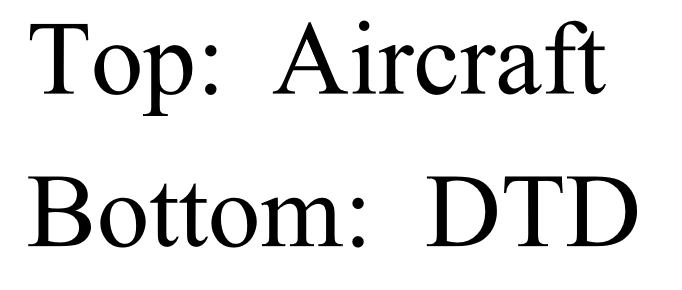}
							\vspace{0.5cm} 
						\end{minipage}
					\end{minipage}
					\begin{minipage}{\linewidth}
						\vspace{-0.6cm}
						\begin{minipage}{\linewidth}
							\begin{minipage}{0.14\linewidth}
								\centering
								\includegraphics[width=\textwidth]{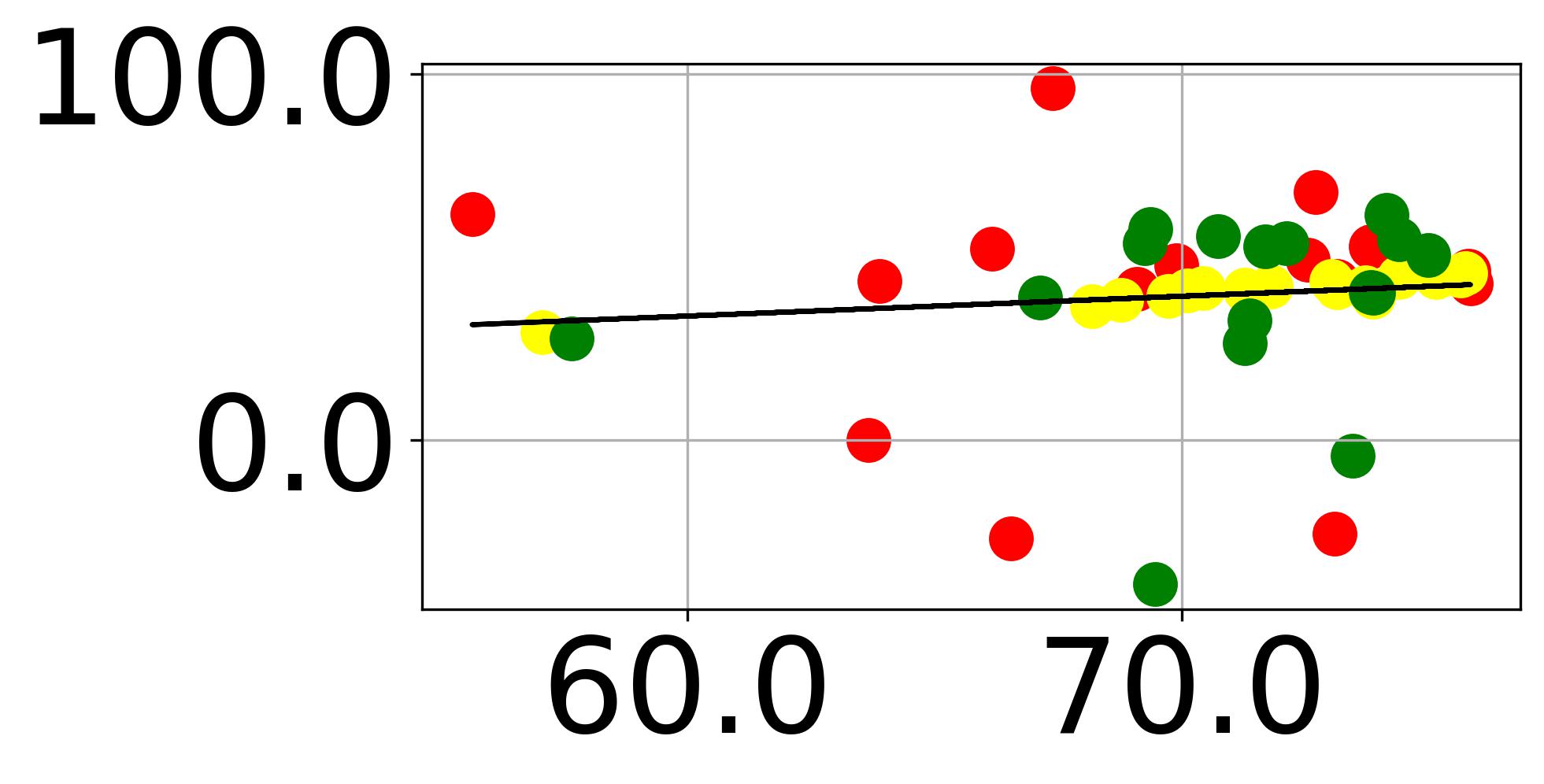}
								\captionsetup{font=small}
								\vspace{-0.6cm}
								\caption*{H-Score}
								\label{}
							\end{minipage}
							\begin{minipage}{0.14\linewidth}
								\centering
								\includegraphics[width=\textwidth]{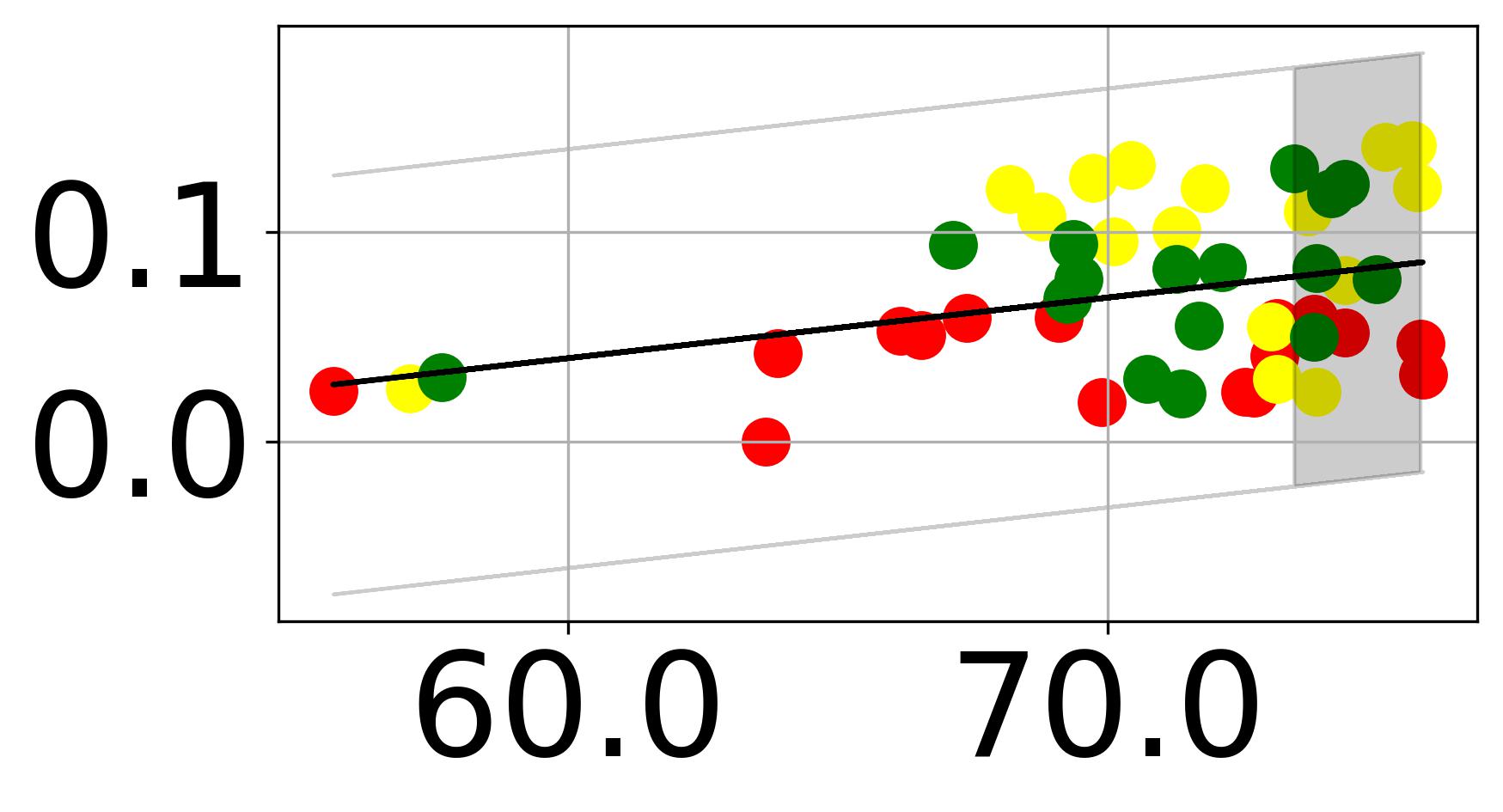}
								\captionsetup{font=small}
								\vspace{-0.6cm}
								\caption*{LFC}
							\end{minipage}
							\begin{minipage}{0.14\linewidth}
								\centering
								\includegraphics[width=\textwidth]{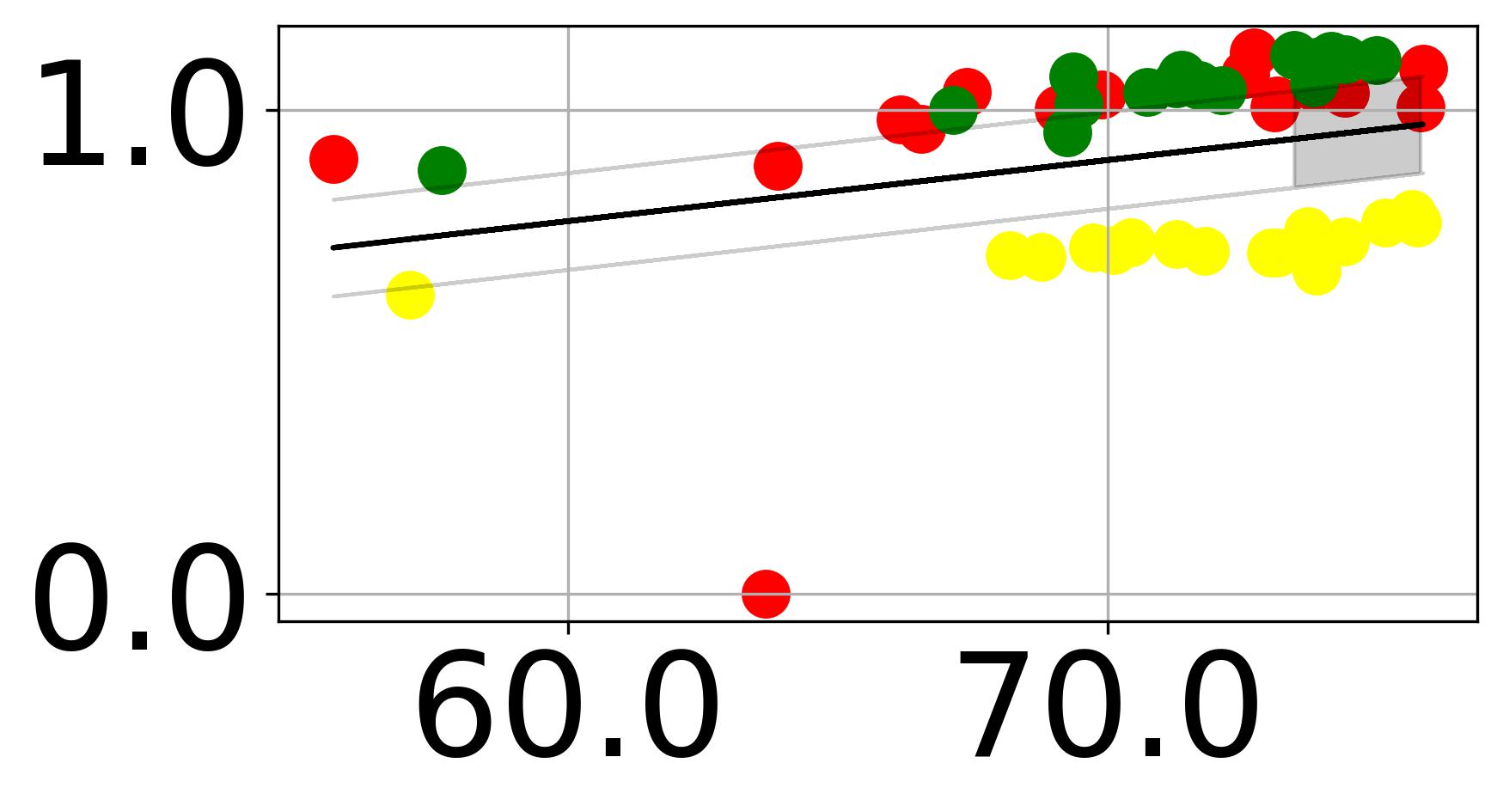
								}
								\captionsetup{font=small}
								\vspace{-0.6cm}
								\caption*{LogME}
							\end{minipage}
							\begin{minipage}{0.14\linewidth}
								\centering
								\includegraphics[width=\textwidth]{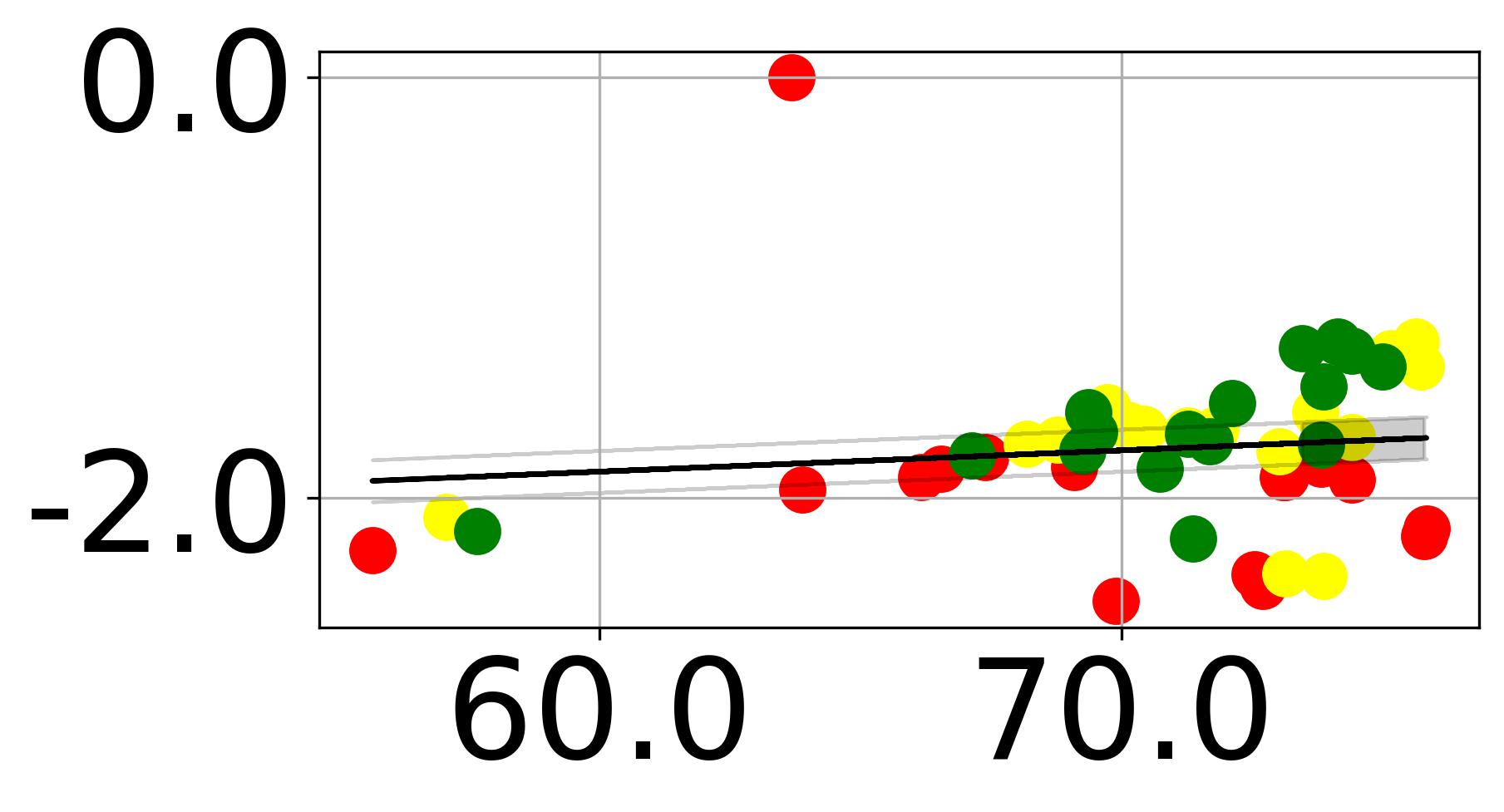
									}
									\captionsetup{font=small}
									\vspace{-0.6cm}
									\caption*{NLEEP}
								\end{minipage}
								\begin{minipage}{0.14\linewidth}
									\centering
									\includegraphics[width=\textwidth]{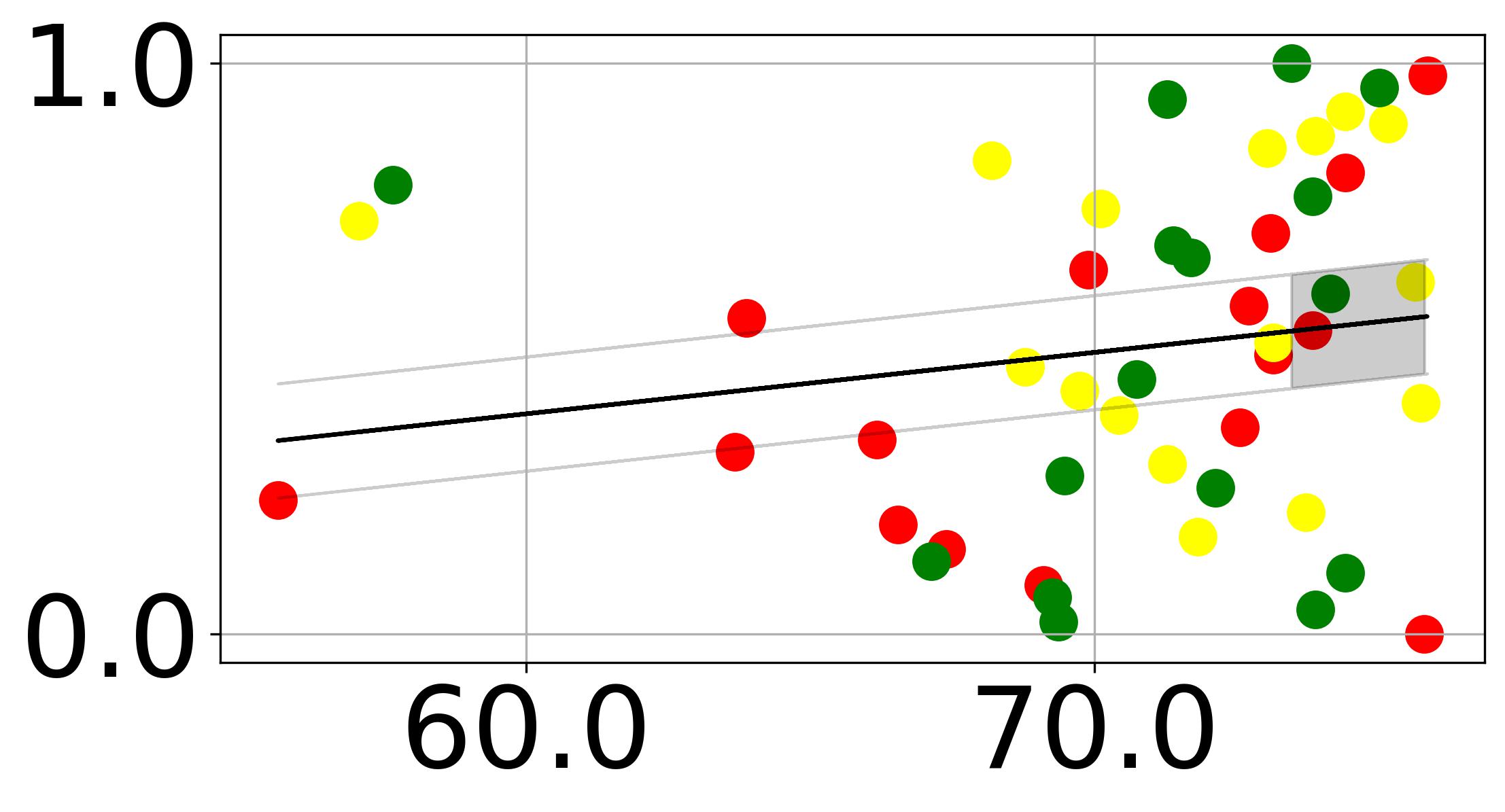}
									\captionsetup{font=small}
									\vspace{-0.6cm}
									\caption*{Model Spider}
								\end{minipage}
								\begin{minipage}{0.14\linewidth}
									\centering
									\includegraphics[width=\textwidth]{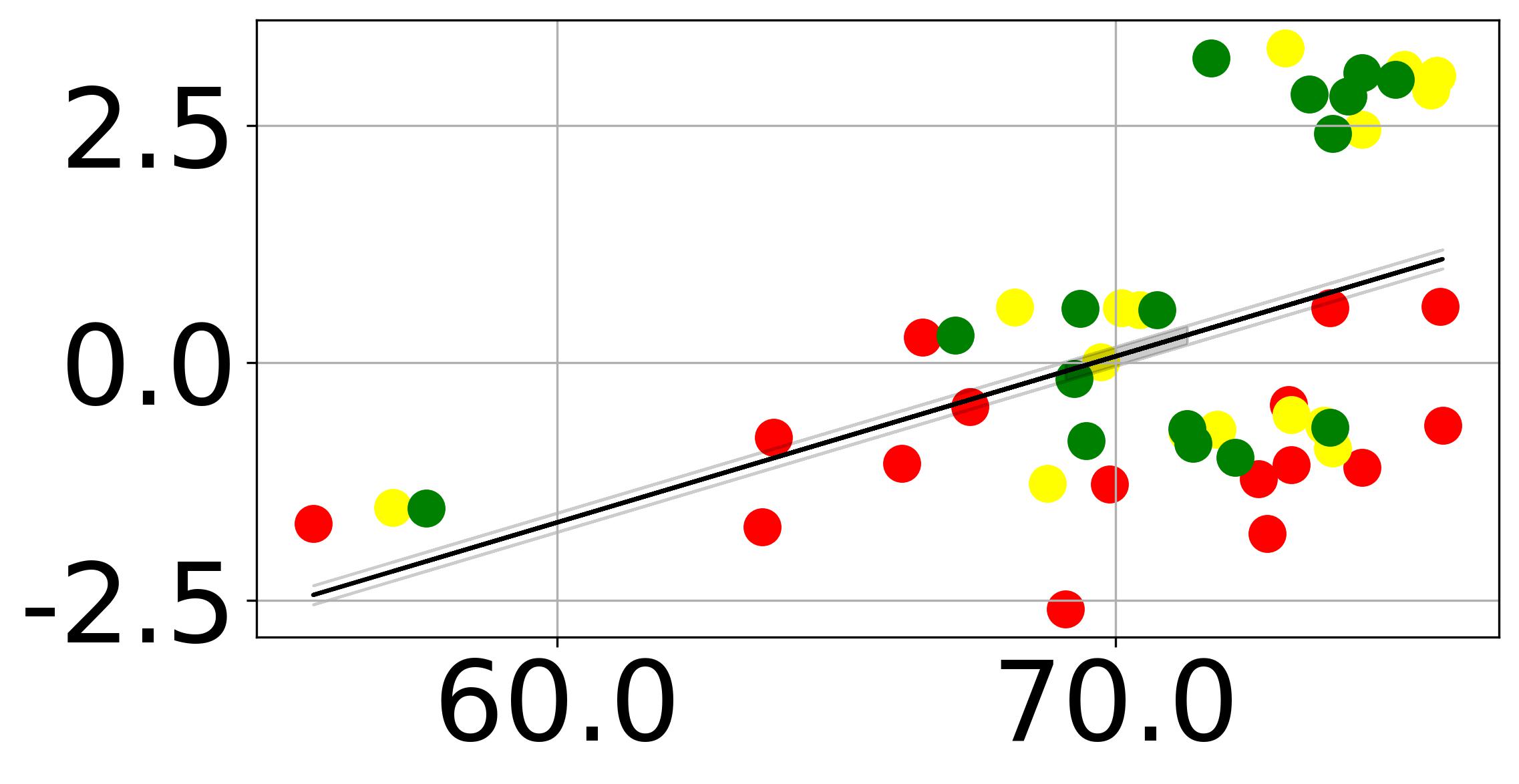}
									\vspace{-0.6cm}
									\caption*{Ours}
									\label{datakm1}
								\end{minipage}
								\begin{minipage}{0.13\linewidth}
									\centering
									\includegraphics[width=0.9\textwidth]{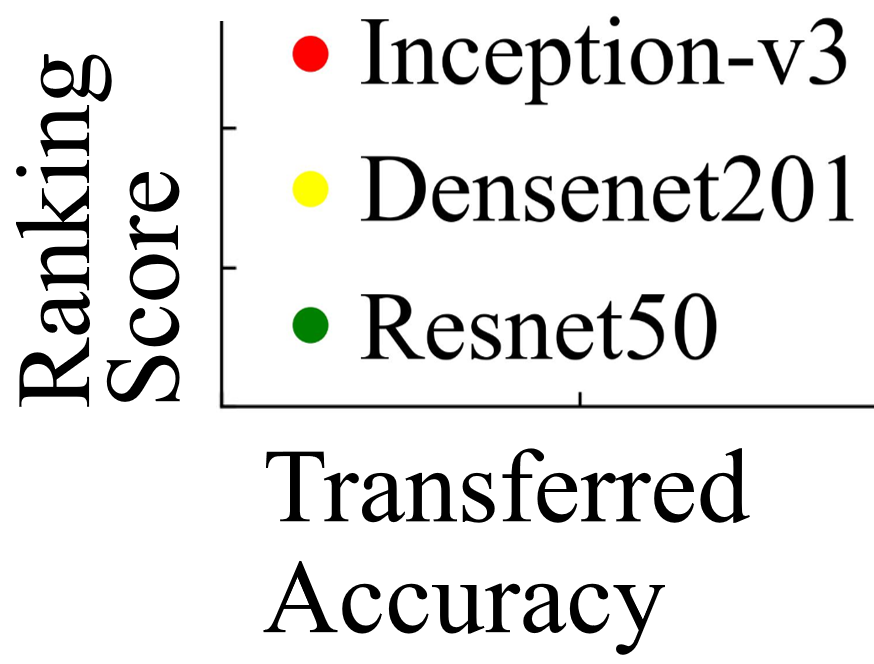}
								\end{minipage}
							\end{minipage}
						\end{minipage}
						
						\vspace{-0.1cm}
						\caption{Visualization description of  Pearson correlation on  SF-MTE experiments.
						}
						\label{visper}
					\end{figure*}
					
					\begin{table*}[t]
						\centering
						\small
						\label{hacc1}
						\begin{tabular}{c|c|c|c|c|c|c|c|c|c|c}
							\hline
							\centering
							&\multicolumn{4}{c}{Classification} &\multicolumn{4}{c}{Regression}&\multicolumn{2}{c}{
								Mean}
							\\		\hline
							\centering
							&\multicolumn{2}{c}{DTD} &\multicolumn{2}{c}{Aircraft}&\multicolumn{2}{c}{UTKFace} &\multicolumn{2}{c}{dSprites}  \\
							\hline
							& P. & S.  & P. &S.  & P. &S. & P. &S  & P. &S. \\
							\hline
							H-Score&0.1081 &0.2311  &0.1915 &0.4967  &  -0.0011 &-0.0012 & \textbf{0.2243}	&0.2014 & 0.0945& 0.2320\\
							NCE&-0.1650 &-0.2559 &0.0845 &-0.0229&-&-&-&- &-0.0402&-0.1394 \\
							Leep & -0.1672 & -0.2229 &-0.2079 &-0.1718 &-&-&-&-  & -0.1875&-0.1973 \\
							NLeep&0.1153 &0.2298  &0.3852 &0.3213& -&-&-&-&0.2502&0.2755\\
							LFC& 0.3508  & 0.2383 &0.4323 &0.4733 & -&-&-&-&\textbf{0.3915} &0.3558
							\\
							\hline
							LogME&0.2367  &0.3280 &0.5310 &0.5337 &-0.0038 &-0.0031 & 0.1082 & 0.1114&0.2180&0.2425 \\
							Model Spider & 0.1705 &  0.2937 &  0.0793 &0.1263  &0.1763 &0.1599 &-0.0365 & -0.0483 &  0.0974&0.1329\\
							\hline
							TANS& 0.2365&0.2738 & -0.3804& -0.3110 & -0.0057&	0.0091& 0.1054	&0.1126 & -0.0110&0.2112 \\
							Ours & \textbf{0.4942}&\textbf{ 0.5122}  & \textbf{0.5545}& \textbf{0.5779}  &\textbf{ 0.1900}  &\textbf{ 0.1909} & 0.1917 & \textbf{0.2608} & 0.3576 & \textbf{0.3854}\\
							\hline
							GPT-4& -0.3228 &-0.1030&   -0.2093& -0.0273&0.0475 &0.0412 &-0.0759 &-0.0814 &-0.1401& -0.0426 \\
							Gemini& -0.0099& 0.0828 & 0.0039 &-0.0184 &0.0269 &0.0764 &0.0797 &0.1475 &   0.0251&0.0720\\
							\hline
						\end{tabular}
						\caption{Performance comparison on source-free model transferability estimation tasks. }
						\label{hacc1}
					\end{table*}

					\subsection{Performance Comparison on NNR Tasks}
					\subsubsection{Experimental Setup.}
					\label{sec:retrieval}  The evaluation experiment is carried on a modified model-hub created from Kaggle\footnote{\url{https://www.kaggle.com/}} with diverse real-world datasets/models following the methodology outlined in TANS\cite{TANS}. We developed 58 classification models and 232 distinct testing tasks. The probe images are randomly selected from the training dataset of Know2Vec.  For fairness, we fine-tuned the model generated from LLMs for 500 steps, as LLMs typically generate neural network rather than select them.  In the specific vector computation, the vectors of our approach and TANS method are 256 dimensions long, while that of Model Spider is 1024 dimensions long.

					
					To thoroughly assess our method against benchmarks, we adopt a suite of established metrics, including:
					(1) Top-k hitting ratio (R@k,\%),
					that measures the overlap percentage between the top-k prediction results and the ground truth. (2) Valid Accuracy(V.Acc,\%) and Fine-Tuned Accuracy(Ft.Acc,\%), which quantify the accuracy of the query task on the top-1 selected model and the results after fine-tuning over 50 trials, respectively. (3) Search Time(Time, s). (4) Privacy(Pri.). We categorize privacy into three tiers of model access  permissions: white-box access $\gamma$, grey-box access $\gamma\gamma$ and black-box access $\gamma\gamma\gamma$. 
					
					\subsubsection{Experimental Analysis.}
					The quantitative comparison results of NNR task are shown in Table \ref{haccnnr}. The best score is in bold. As can be seen,
					in the evaluation of statical methods, Leep achieves higher retrieval accuracy among the evaluated methods, due to its focus on average loglikelihood. However, it falls slightly behind in search time compared to NCE, NLeep, and LFC. H-Score, unfortunately, underperforms in both search time and accuracy, possibly due to its complex calculation and lack of consideration for similarities within categories. Dynamic methods such as Model Spider and LogME also struggle, possibly because of their focus on ranking order of transferability on abundant downstream data, whereas NNR task is more concerned with the performance of the selected top-1 model. Despite its lower accuracy than LogME, Model Spider benefits from vector-based computations, consuming less search time. The same beneficiaries also include our method and TANS.
					Fortunately, TANS excels in search time and provides substantial retrieval accuracy although it offers only a sub-optimal level of privacy. Our method, while maintains superiority in terms of computation time, offers superior retrieval performance and maintains privacy. Notably, our method achieved a 1.72\% increase in retrieval accuracy over the suboptimal result, which demonstrates the superior precision of the knowledge alignment space embedded in our proposed proxy. Undoubtedly, although GPT-4 and Gemini ensure a strong privacy since they do not require access to model zoo, their performances in statistical data falls short of expectations.
					
					\label{isomorphic}
					\subsection{Performance Comparison  on SF-MTE tasks}

					\subsubsection{Experimental Setup.}
					We construct a heterogeneous model zoo similar to previous work\cite{Spider}, where we collect 48 publicly available pre-trained models trained on diverse datasets\footnote{\url{https://bmwu.cloud//}}, covering various neural network architectures. The probe dataset is filtered from several publicly available datasets. We evaluate various methods on 4 different downstream tasks,  Aircraft\cite{aircraft} and DTD\cite{dtd}  for classification, UTKFace \cite{UTKFace} and dSprites\cite{dsprites} for regression. We leave blank for the regression column of NCE, Leep, NLeep, and LFC since they cannot be used for regression tasks. 


					
					
					We measure the performance of SF-MTE with  Pearson(P.)\cite{pearson} and Spearman(S.)\cite{spearman} correlation scores, as they are widely adopted \cite{LEEP,NLEEP,Spider,TANS,LFC} to evaluate the relationship between the predicted transferability scores and test accuracy.
					
					\subsubsection{Experimental Analysis}
					
					\begin{figure*}[t]
						\vspace{-0.4cm}
						\centering
						\hfill
						\begin{minipage}{0.24\textwidth}
							\centering
							\begin{table}[H]
								\small
								\setlength{\tabcolsep}{1mm}
								\begin{tabular}{c|c|c|c}
									\hline
									$M_{EXT}$&\multicolumn{3}{c}{$Q_{EXT}$ } \\
									\hline
									& LSTM & ConCat & Avg. \\
									\hline
									LSTM & $\mathbf{94.82}$ & 92.54 & 88.14 \\
									ConCat & 90.87 & 94.05 & 87.37 \\
									Avg. & 90.33 & 89.34 & 89.35 \\
									\hline
								\end{tabular}
								\caption{Ablation study of retrieval architecture.}
								\label{ab1}
							\end{table}
						\end{minipage}%
						\hfill
						\begin{minipage}{0.24\textwidth}
							\centering
							\begin{table}[H]
								\small
								\centering
								\setlength{\tabcolsep}{1mm}
								\begin{tabular}{c|c}
									\hline
									Loss Function & Accuracy \\
									\hline
									w/o $L_{MKC}$ & 82.97 \\
									$L_{SAL}$(Cos.) & $\mathbf{94.82}$ \\
									$L_{SAL}$(Con.) & 93.53 \\
									\hline
									$P_{train}$ & 95.25 \\
									\hline
								\end{tabular}
								\caption{Ablation study of loss functions and probe dataset.}
								\label{loss}
							\end{table}
						\end{minipage}%
						\hfill
						\begin{minipage}{0.45\textwidth}
							\centering
							\includegraphics[width=0.78\linewidth]{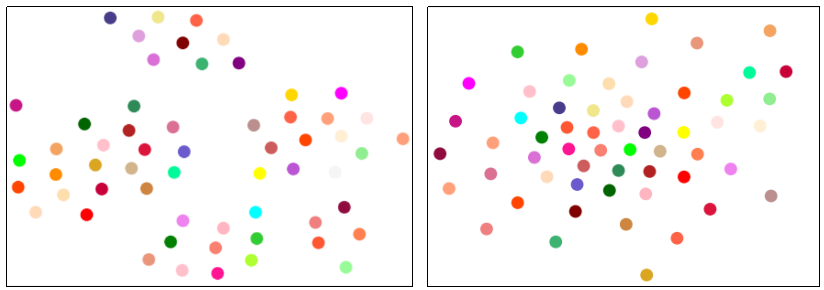}
							\caption{Visual description of model knowledge vectors.}
							\label{datakn1}
							\vspace{-0.4cm}
						\end{minipage}
					\end{figure*}
					
					The statistical evaluations of model transferability over classification and regression tasks are shown in Table \ref{hacc1}.
					To provide a clear representation of the correlation between the baseline predictions and the actual accuracy, we visualize the top-6 results of Pearson correlation scores in Fig.~\ref{visper}. In the static methods, LFC and NLeep show a consistent performance, achieving positive and satisfactory scores in both Pearson and Spearman evaluations. By comparison, NCE and Leep show negative correlation coefficients.  H-Score performs better on Spearman score than Pearson score, probably because Pearson score is more sensitive to the predicted outlier's scores. 
					In dynamic SF-MTE methods, Model Spider performs poorly, perhaps because its insufficient robustness. In contrast, LogME excels in  classification tasks, a testament to the precision of its linear estimation model. TANS struggles in the SF-MTE tasks, while GPT-4 and Gemini display negative or near-zero correlation coefficients on most datasets, indicating a less competitive performance compared to other methods.
					Our method excels in classification tasks, with improvements of Spearman coefficient reaching 0.1842 and 0.0442 over the sub-optimal result. This indicates a strong correlation between predicted transferability scores and actual test accuracy, as shown in Fig.\ref{datakn1}, thanks to the semantically rich knowledge vectors  and precise matching process. Although our method is not the best in every evaluation dimensions, which may due to being trained only on classification tasks, it still delivers satisfactory results across all tested tasks.

					\subsection{Ablation Study }
					We assess the performance of each component designed in the proposed proxy on the Kaggle-hub. 
					
					\label{ablationarch}
					
					\subsubsection{Analysis of Knowledge Vectorization Architecture.}
					As shown in Table \ref{ab1}, we explored alternative sequence encoding methods, transitioning from an LSTM network to simpler network such as averaging (Avg.) and concatenation (ConCat).
					Fixing the structure of $M_{EXT}$ to an LSTM, we found that the result in the first column (94.82\%) is significantly higher than those in the second column (92.54\%)  and third column (88.14\%), suggesting that the $LSTM_t$ in $Q_{EXT}$ captures query knowledge more accurately.  Likewise, the first row's retrieval accuracy in the first column significantly outperforms the other rows, highlighting the effectiveness of LSTM-based vector encoder in $M_{EXT}$ in extracting detailed model information.
					We further made T-SNE visualization of model representations before(left) and after(right) encoding in $M_{EXT}$. In Fig.~\ref{datakn1}, different colors correspond to the knowledge vectors for different models. The right figure shows an improvement over the left by correctly separating models that were incorrectly clustered together based on their semantics.
					
					\subsubsection{Analysis of Different Loss Functions.}
					First, we tested two unsupervised loss functions, cosine loss (Cos.) and contrastive loss (Con.) for spatial alignment loss $L_{SAL}$. It can be seen from Table \ref{loss}, Know2Vec achieved the highest accuracy of 94.82\% with $L_{SAL}(\text{Cos})$, allowing for the natural knowledge alignment. Moreover, we observe a slight drop in performance without $L_{MKC}$, and this highlights the importance of $L_{MKC}$ in filtering noise from model vectors. 
					\subsubsection{Analysis of Different Probe Datasets.}
					
					In Table \ref{loss}, the value of $P_{train}$ indicates the model retrieval accuracy when the target model's training dataset is used as the probe dataset, suggesting that alternative dataset might be as effective as the training dataset in generating model knowledge vectors.  The same image can serve as a probe to further generate knowledge vectors for different models with alternative dataset, thereby accurately depicting the semantic differences of models in the knowledge consistency space.
					
					
					\section{Conclusion}
					In this paper, we propose Know2Vec, a novel proxy for neural network retrieval under a black-box situation. This proxy translates both model knowledge and query data knowledge into vectors, and thus enhancing the accuracy of the retrieval process by ensuring the knowledge consistency among them.
					The experimental results from NNR and SF-MTE tasks confirm that Know2Vec surpasses the state-of-the-art baseline methods in retrieval precision with acceptable retrieval speed, while also addressing privacy concerns.
					

\appendix
\section{Detailed derivation and proof for Lemma 1}

\label{appproof}
\begin{theorem}[Mean Value Theorem]
	\label{theorem1}
	Let $f$ be a continuous function on the closed interval $[a, b]$ and differentiable on the open interval $(a, b)$. Then, there exists at least one point $c \in (a, b)$ such that
	\[ f'(c) = \frac{f(b) - f(a)}{b - a}. \]
\end{theorem}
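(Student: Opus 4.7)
The plan is to follow the classical route and reduce the Mean Value Theorem to Rolle's theorem via an auxiliary function that subtracts off the secant line. Specifically, I would define
\[ g(x) = f(x) - f(a) - \frac{f(b)-f(a)}{b-a}\,(x-a), \qquad x \in [a,b], \]
and then verify that $g$ inherits the hypotheses of $f$: it is continuous on $[a,b]$ and differentiable on $(a,b)$ because it is the sum of $f$ and an affine function. A direct computation shows $g(a) = 0$ and $g(b) = 0$, which puts $g$ in position to invoke Rolle's theorem.

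Next, I would apply Rolle's theorem to obtain a point $c \in (a,b)$ with $g'(c) = 0$. Since $g'(x) = f'(x) - \frac{f(b)-f(a)}{b-a}$ for every $x \in (a,b)$, rearranging the equation $g'(c) = 0$ yields exactly the desired identity
\[ f'(c) = \frac{f(b)-f(a)}{b-a}. \]

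The main obstacle is that the proof is only a genuine proof if Rolle's theorem is taken as given, so in a fully self-contained write-up the real work is hidden in Rolle: one needs the Extreme Value Theorem (a continuous function on a compact interval attains its maximum and minimum) together with Fermat's interior-extremum principle (at an interior local extremum of a differentiable function, the derivative vanishes). If I had to include that step, I would argue that $g$ attains its extrema on $[a,b]$; if both the maximum and minimum are attained at the endpoints, then $g \equiv 0$ on $[a,b]$ and any $c \in (a,b)$ works, while otherwise at least one extremum is interior and Fermat gives $g'(c) = 0$ there.

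Aside from invoking Rolle, no delicate estimates are needed; the construction of the auxiliary $g$ is the only creative step, and it is standard. I would therefore present the proof in the compact three-step form above (construct $g$, check Rolle's hypotheses, read off $f'(c)$) and, if the surrounding paper warrants it, append a brief justification of Rolle via EVT plus Fermat.
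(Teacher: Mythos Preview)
Your argument is correct and is the standard textbook proof of the Mean Value Theorem via Rolle's theorem; the auxiliary function and the subsequent steps are exactly right.

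However, there is nothing to compare against: the paper does not prove this statement at all. Theorem~\ref{theorem1} is simply the classical Mean Value Theorem from calculus, stated without proof in the appendix and then invoked as a black box in the derivation of Lemma~1. So while your write-up is fine as a self-contained proof, the paper treats the result as background and offers no argument of its own for you to match or contrast with.
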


\newtheorem{assumption}{Assumption}
\begin{assumption}
	For a binary classification function $\Phi(x)=\delta (\mathbf{w}*x+\mathbf{b})$, let $\delta = g_A - g_B$ be a differentiable function near the boundary decision sample.   This is a strong assumption since $g_A$ and $g_B$ are differentiable functions around the points of interest. It is also a simplification since the actual derivative of the ReLU function is not defined at zero.
\end{assumption}

\begin{lemma}
	The perturbation vectors in $KRM$ can also be obtained from the models with associating the external  datasets.
\end{lemma}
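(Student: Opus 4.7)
The plan is to mirror the argument sketched in the main text but to make the topological prerequisites and the Mean Value Theorem step explicit. First I would fix the binary classifier $\Phi(x)=\delta(\mathbf{w}*x+\mathbf{b})$ with $\delta=g_A-g_B$, together with the training-derived centroids $x_a,x_b$ and boundary point $x_a^b$ whose $\delta$-values are $1,-1,0$ respectively. The goal is to produce probe samples $z_a,z_b,z_a^b$ drawn from an external dataset whose $\delta$-values are respectively $1-\lambda_1,\,-1+\lambda_2,\,-\lambda_3$ for negligibly small $\lambda_i$, and then to express the training-based perturbation $\mathbf{r_a^b}=x_a^b-x_a$ purely in terms of $z_a,z_a^b$ plus small offsets depending only on the fixed $\Phi$.

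Next I would make the existence of such probe samples explicit. Because $g_A$ and $g_B$ are posterior-style outputs saturating to $1$ on confident inputs of each class, it is reasonable to assume the probe pool contains an element $z_a$ with $g_A(z_a)\approx 1$ and analogously $z_b$ for $B$. Applying the Intermediate Value Theorem to the continuous map $t\mapsto \delta(\mathbf{w}*((1-t)z_a+t z_b)+\mathbf{b})$ along the segment from $z_a$ to $z_b$ produces a crossing point $z_a^b$ with $\delta(\mathbf{w}*z_a^b+\mathbf{b})\approx 0$, giving the required triple.

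Then I would invoke Theorem \ref{theorem1} together with Assumption 1 to translate closeness in the output of $\delta$ into closeness in the input. Since $\delta$ is continuous and differentiable in a neighborhood of the decision boundary, the identity $\delta(\mathbf{w}*x_a^b+\mathbf{b})-\delta(\mathbf{w}*z_a^b+\mathbf{b})=\lambda_3$ forces the existence of a displacement $\sigma$ with $z_a^b=x_a^b+\sigma$, where $\|\sigma\|$ is controlled by $\lambda_3$ divided by the local gradient norm of $\delta\circ(\mathbf{w}*\cdot+\mathbf{b})$; an analogous application yields $\sigma_a$ with $x_a=z_a+\sigma_a$. Substituting these two identities into $\mathbf{r_a^b}=x_a^b-x_a$ gives $\mathbf{r_a^b}=z_a^b-z_a+\sigma+\sigma_a$ (Eq.~\ref{eqr}), in which the offsets depend only on the fixed model and the probe samples, completing the binary case. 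The extension to the full $KRM$ and to $k$-class classifiers then follows by symmetry: each entry $\mathbf{r_i^j}$ is handled by restricting attention to the pairwise decision surface $g_i-g_j=0$ and repeating the construction.

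The step I expect to be hardest is the existence claim for probe samples whose $\delta$-values approximate $\pm 1$ and $0$ closely enough that the $\lambda_i$ can truly be neglected. This is where the argument is load-bearing: if the external dataset is impoverished for some class, no high-confidence $z_a$ exists and the MVT bound on $\sigma_a$ degrades. Making this precise may require either a density assumption on the probe pool relative to the high-confidence regions of $\Phi$, or weakening the lemma to a quantitative statement in which the approximation error of $\mathbf{r_a^b}$ is explicitly bounded in terms of $\lambda_1,\lambda_2,\lambda_3$ and $\|\nabla\delta\|$. I would flag this trade-off at the end of the proof so the reader understands the scope of the "can also be obtained" claim.
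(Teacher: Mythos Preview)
Your proposal is correct and follows essentially the same route as the paper: fix the binary classifier, posit probe samples $z_a,z_b,z_a^b$ with $\delta$-values $1-\lambda_1,-1+\lambda_2,-\lambda_3$, invoke the Mean Value Theorem under the differentiability assumption to produce offsets $\sigma,\sigma_a$ with $z_a^b=x_a^b+\sigma$ and $x_a=z_a+\sigma_a$, and substitute to obtain $\mathbf{r_a^b}=z_a^b-z_a+\sigma+\sigma_a$ before extending pairwise to the full $KRM$. Your version is in fact more careful than the paper's in two respects: the paper simply asserts the existence of the probe triple without your Intermediate Value Theorem construction along the $z_a$--$z_b$ segment, and it does not flag the density requirement on the external dataset that you rightly identify as the load-bearing hypothesis.
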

\begin{proof}
	We assume that $\delta = g_A - g_B$ can be locally approximated by a differentiable function $\hat{\delta}$, in the vicinity of the points $x_a^b$, $x_a$, $x_b$, $z_a^b$, $z_a$ and $z_b$.
	Applying Theroem \ref{theorem1}, there exists a point $x_c$ such that:
	\[
	\hat{\delta}'(\mathbf{w} \cdot x_c + \mathbf{b}) = \frac{\hat{\delta}(\mathbf{w} \cdot z_a^b + \mathbf{b}) - \hat{\delta}(\mathbf{w} \cdot x_a^b + \mathbf{b})}{\mathbf{w} \cdot (z_a^b - x_a^b)}
	\]
	Since $\hat{\delta}(\mathbf{w} \cdot x_a^b + \mathbf{b}) = 0$ and $\hat{\delta}(\mathbf{w} \cdot z_a^b + \mathbf{b}) = \lambda_3$,
	
	we have:
	\[
	\hat{\delta}'(\mathbf{w} \cdot x_c + \mathbf{b}) = \frac{\lambda_3}{\mathbf{w} \cdot (z_a^b - x_a^b)}
	\]
	Let's define $\sigma = z_a^b - x_a^b$. Then, we can rewrite the above equation as:
	\[
	\hat{\delta}'(\mathbf{w} \cdot (x_c + \sigma) + \mathbf{b}) = \frac{\lambda_3}{\mathbf{w} \cdot \sigma}
	\]
	Therefore, there exists $\sigma $ that satisfies $z_a^b=x_a^b +\sigma$.
	
	The proven conclusions can also be applied to other classification models, as they can be seen as a combination of binary classification models.
\end{proof}

\section{Experimental Setups and Implementation Details}
\label{detailexp}
\subsection{Implementation Details of the Kaggle Model Zoo.}
We fine-tuned the Mobile-Net\cite{mobilev2} model, which was pre-trained on ImageNet-1K \cite{imagenet}, and modified its architecture to cater for the needs of a classification model. Utilizing the Adam optimizer with a learning rate of \(10^{-4}\), all experiments were conducted on two NVIDIA TITAN Xp GPUs. To ensure a fair comparison, each model's training dataset is comprised over 1000 images. We implemented a random sampling strategy with a 9:1 ratio for training and validation datasets, guaranteeing that the classification testing accuracy of each model met the desired outcomes.

Our model zoo is a diverse collection that spans  various domains such as fruit classification, digital recognition, traffic sign detection,  medical image classification and others.  
The specific models are shown in Table \ref{lagglezoo}, which also  includes a brief description of each model's expertise domain. Due to the space constraint, the list has been truncated and does not include all model names. Please refer to TANS \cite{TANS} for the detailed information for each dataset.  
\begin{table}[h]
	\centering
	\begin{tabular}{c|c} 
		\hline
		\textbf{Expert Field} & \textbf{Model Name} \\
		\hline
		Fruit Recognition & fruit-recognition\_ch8.pth,  ... \\
		Anime Character &  simpsons4\_20\_394.pth, ... \\
		Traffic Sign &  gtsrb-german-traffic0.pth, ... \\
		Medical Image  & csep546-aut19-kc2\_0\_0.pth, ... \\
		Language Recognition &  devanagari-character7.pth, ... \\
		Landscape image &land-cover-class\_0\_18.pth...\\
		Digital Classification &synthetic-digits\_pra8.pth\\
		Digital Clock    &csep546-aut19-kc2\_0\_0.pth\\
		Gesture Classification& asl-alphabet\_grasskn19.pth\\
		Others & numta\_BengaliAI\_0\_101.pth,...
		\\
		\hline
	\end{tabular}
	\caption{List of model names in kaggle model zoo.}
	\label{lagglezoo}
\end{table}

\subsection{Implementation Details of the Pre-trained Model (PTM) Zoo.}
Following Model Spider \cite{Spider}, we construct a large model zoo
with 48 heterogeneous models. PTMs are pre-trained on 3 similar architectures, i.e., Inception\_V3 \cite{incepv3}, ResNet50 \cite{res50} and DenseNet201 \cite{densenet} from 16 datasets in different domains, including Caltech101\cite{caltech101},
Cars \cite{cars}, CIFAR10 \cite{cifar10}, CIFAR100 \cite{cifar100}, AID \cite{AID}, SUN397 \cite{SUN397}, Dogs \cite{standogs}, EuroSAT \cite{helber2019eurosat}, Flowers \cite{nilsback2008automated},
Food \cite{bossard2014food}, NABirds \cite{van2015building}, PACS \cite{li2017deeper}, Resisc45 \cite{resisc45}, SmallNORB \cite{smallnorb} and SVHN \cite{svhn}, STL10 \cite{stl10}. The detailed specific expert fields are shown in Table \ref{ptmzoo}.  

\begin{table}[h]
	\centering
	\begin{tabular}{c|c} 
		\hline
		\textbf{  Expert Field} & \textbf{Model Name} \\
		\hline
		Animals &Dogs,NABirds \\
		Plants &  Flowers \\
		Scene-based &  SUN397 \\
		Remote Sensing  & AID,Resisc45, EuroSAT \\
		3D objects & CIFAR100, Caltech101,CIFAR10\\
		&Food,Cars,SmallNORB\\
		Domain Adaption& PACS\\
		Others & SVHN,STL10\\
		\hline
	\end{tabular}
	\caption{List of model names in pretrained model zoo.}
	\label{ptmzoo}
\end{table}

\subsection{Training Details of Know2Vec}
For the Know2Vec training process of NNR task, we set the batch size to 200 and  minimize the training loss with the learning rate of 1e-4 on Adam optimizer. We conducted rigorous validation checks to ascertain that there was no dataset overlap among the model training datasets, Know2Vec the training datasets, and query task sets. We sample training and testing samples from datasets with similar data distributions to the model. During the construction process of the query task, 5 images are randomly sampled for each category, and each image is cropped to 64x64. Specifically, the hidden embedding size in all LSTM network are set to 1000. To quicken training process, the input samples $\{x_a,x_a^b,..., x_d \}$ of $M_{EXT}$ and $\{s\}$ of $Q_{EXT}$  are first  converted to the high-dimensional feature with a Resnet-18 \cite{targ2016resnet}  that is pretrained on ImageNet-1K \cite{imagenet} serve as the backbone neural network. 

For the training process of Know2Vec for SF-MTE task, we maintain a batch size of 200 and  minimize the training loss with the 1e-3 learning rate on Adam optimizer. Specifically, we utilized PyTorch's ExponentialLR scheduler, which applies a multiplicative factor to the current learning rate after every epoch, which is set to 0.95.
The task was trained from a diverse array of datasets,
including CUB2011\cite{cub2011},  CIFAR100 \cite{cifar100}, SUN397 \cite{SUN397}, Dogs \cite{standogs}, 'VLCS', ImageNet \cite{imagenet},
and VLCS \cite{vlcs}. To compose a representative training dataset, we randomly sample over 700 tasks with each task spanning 1 to 2 mixed datasets. Each task comprises 50 to 200 images, including 50-100 categories. Keeping consistent with Model Spider \cite{Spider}, each image is cropped to 224x224. 

We adopted a simplified approach to obtain the ground truth rankings for Know2Vec training task.  While it was feasible to collect a sufficient image dataset for each training task, the process of fine-tuning all of the 48 models across over 700 tasks   would have been prohibitively resource-intensive. 
For each model, we froze the feature extraction part and only fine-tuned the classification layer. These results served as  an approximation of the  comprehensive fine-tuned accuracy. The underlying intuition is that for a given training task $\mathbf{t}$, if model $\Phi_1$  performs significantly better than model $\Phi_2$ after full fine-tuning,  it is reasonable to expect $\Phi_1$ to continue this superior performance trend even with partial fine-tuning. Consequently, model pairs exhibiting substantial performance disparities on the same task tend to have a more pronounced impact on the loss functions during training process, thereby encouraging the retrieval framework to learn accurate matching knowledge.


\section{Detailed Experimental Analysis from Different Dimensions}

\subsection{Detailed Analysis of Influence Factors in Query Representation}
\subsubsection{ Robustness Analysis of Various Query Parameters}
\begin{figure}[!h]
	\centering
	\begin{minipage}{0.99\linewidth}
		\centering
		\includegraphics[width=1.00\linewidth]{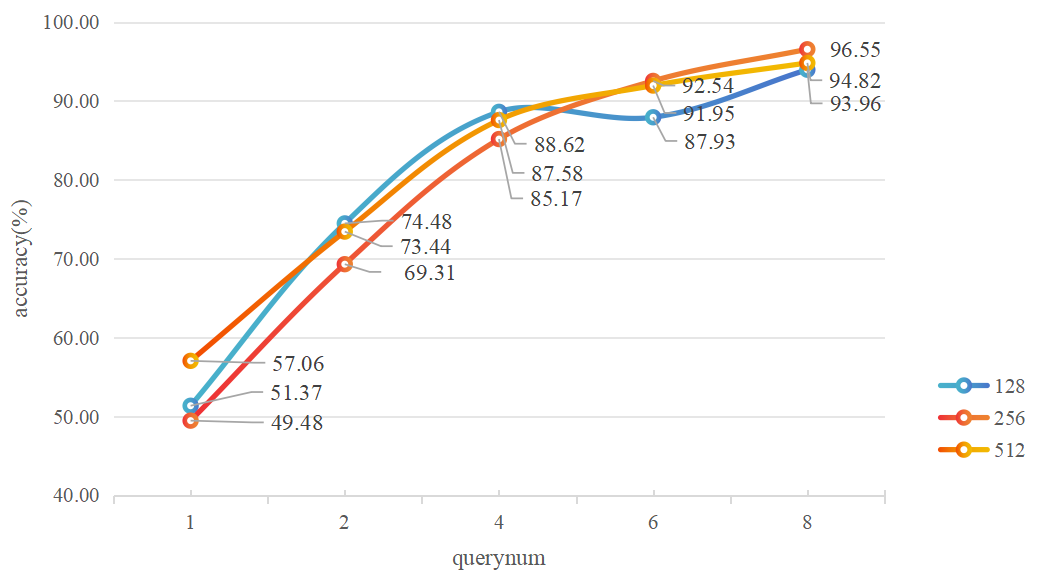}
	\end{minipage}
	\caption{Retrieval accuracy of different number of query samples.}
	\label{qpara}
\end{figure}

\begin{figure*}[!h]
	\label{datakn}
	\begin{minipage}{0.24\linewidth}
		\centering
		
		\includegraphics[width=0.99\linewidth]{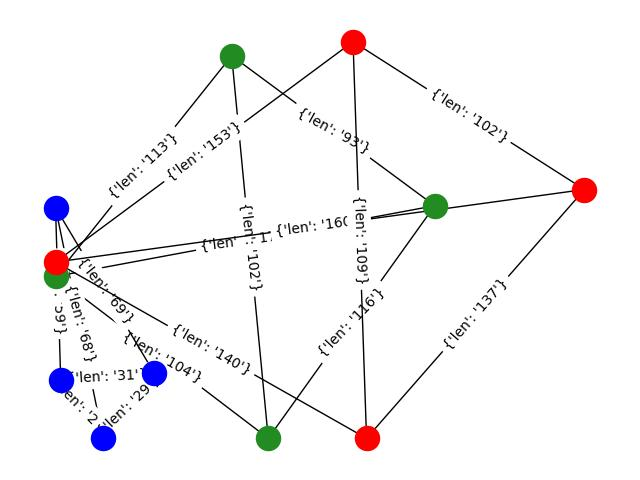}
		\caption*{(a) Total distance graph}
		\label{totdatakm}
	\end{minipage}
	\begin{minipage}{0.24\linewidth}
		\centering
		\includegraphics[width=0.99\linewidth]{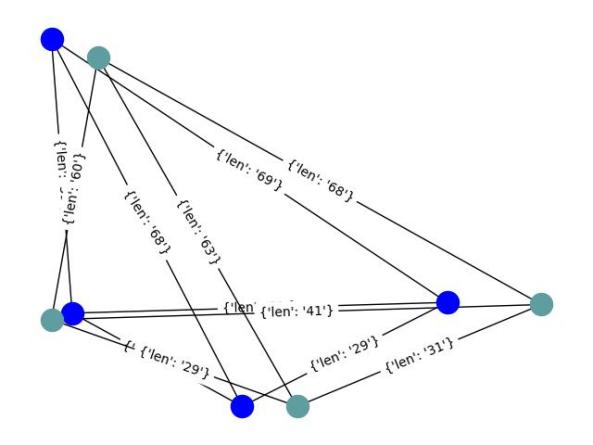}
		\caption*{(b) Devanagari-character7}
		\label{datakm1}
	\end{minipage}
	\begin{minipage}{0.24\linewidth}
		\centering
		\includegraphics[width=0.99\linewidth]{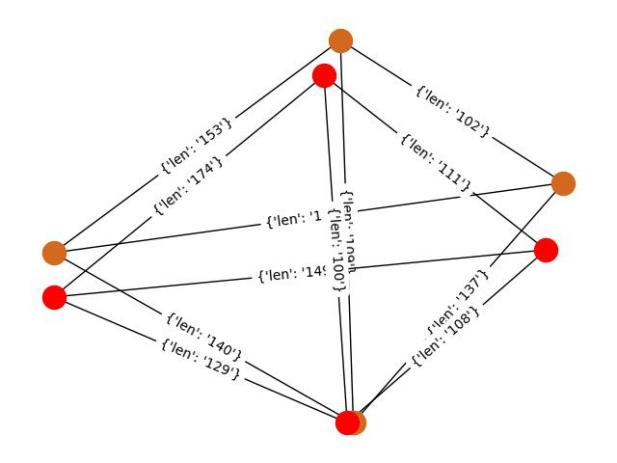}
		\caption*{(c) Intel-image-classification0}
		\label{datakm2}
	\end{minipage}
	\begin{minipage}{0.24\linewidth}
		\centering
		\includegraphics[width=0.99\linewidth]{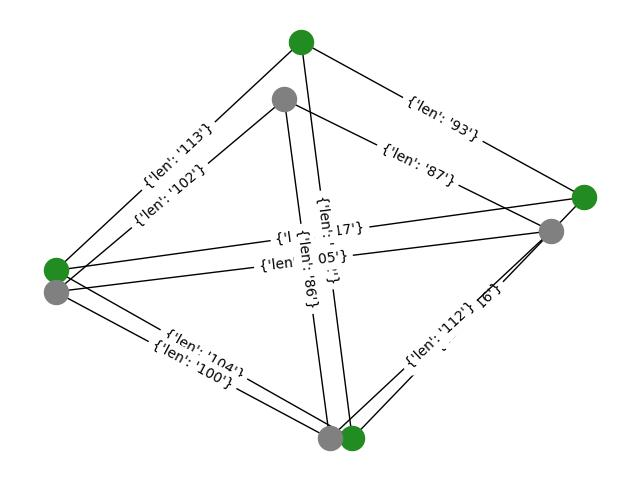}
		\caption*{(d) The-simpsons-character4}
		\label{datakm3}
	\end{minipage}

	\begin{minipage}{0.33\linewidth}
		\centering
		\includegraphics[width=0.99\linewidth]{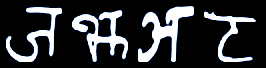}
		\caption*{(e) Devanagari-character7}
		\label{datasample2}
	\end{minipage}
	\begin{minipage}{0.33\linewidth}
		\centering
		\includegraphics[width=0.99\linewidth]{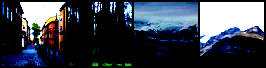}
		\caption*{(f) Intel-image-classification0}
		\label{datasample3}
	\end{minipage}
	\begin{minipage}{0.33\linewidth}
		\centering
		\includegraphics[width=0.9\linewidth]{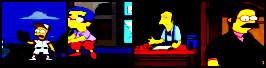}
		\caption*{(g) The-simpsons-character4}
		\label{datasample4}
	\end{minipage}
	\caption{Visualization Explanation of Three Query Datasets. Figures (e), (f), and (g) are example images of these semantically different query datasets, (a) is the distance graphs obtained by calculating the Euclidean distances between their features, (b), (c) and (d) are corresponding distance graphs obtained by sampling two similar sets of images separately.}
	\label{datakn}
\end{figure*}

There are two key query parameters may affect the retrieval performance,
and they are the number $q_n$ of query images per class, and the knowledge embedding length $q_l$.  We conduct an ablation study to illustrate the retrieval behaviors with varying combinations of them.
For each test task, we randomly select a range of 2 to 8
query images per class as the query dataset, and tested their performance on the NNR task, as illustrated in Fig. \ref{qpara}. To investigate the affection of knowledge embedding length on retrieval accuracy, we trained the retrieval framework with knowledge vectors of 128, 256, and 512 dimensions.

It can be seen from Fig.\ref{qpara} that a large value of $q_n$ ( x-axis )
leads to an increasing retrieval accuracy in most cases. For the 128, 256, and 512-dimensional query vectors, the accuracies for query
dataset consisting of 8 images increase by 11.03\%, 10.86\%,
and 8.79\% compared to those for query dataset with 2 image per category.  There is no doubt about this, because as $q_n$ increases, the retrieval framwork gathers more comprehensive knowledge.

Comparatively, the length of the query vector has a minimal impact on retrieval accuracy. Specifically, when there are 7 query images per class, the 512-dimensional embeddings yield the highest accuracy, surpassing the lowest-performing 128-dimensional results by 5.18\%.  However, with only 3 query images per class, the 128-dimensional embeddings demonstrate superior performance. The relative underperformance of 128-dimensional vectors in most case can likely be traced to their reduced capacity to capture finer details. In conclusion, the 256-dimensional knowledge vectors are sufficient to achieve alignment in the NNR task.

\subsubsection{Visualization Explanation of Query Samples}
We randomly selected three query tasks and presented partial images from the query dataset along with the visualization results of  inter-category feature distance, as depicted in Fig. \ref{datakn}. For each query task, we select a representative image from each category, as shown in Figs. \ref{datakn} (e), (f), (g), respectively. Subsequently, leveraging the aforementioned backbone,  we computed the Euclidean distances between the features for each set of images and presented these in a graphical format in Fig. \ref{datakn} (a). The blue, red, and green dots denote the visualization results of Figs .\ref{datakn} (e),  (f) and  (g), respectively. Furthermore, we sampled another set of images for each query task and showed the corresponding distance graphs  in (b), (c), and (d), respectively. The results are striking: a notable disparity in feature distances across various tasks is evident in Fig.\ref{datakn} (a), while for the same task, the feature distances among different categories exhibit a remarkable consistency between different images, as shown in Figs. \ref{datakn} (b), (c), and (d). This observation underscores the significant variability in query features of various categories across different task.
The insights gleaned from this analysis are vital for constructing knowledge vectors in query tasks, adeptly capturing the nuanced semantic interactions among features of different categories, and thus enable precise neural network retrieval. 
The ablation study presented in the main text substantiates the validity of our underlying motivation, affirming the robustness of the query knowledge extractor $Q_{EXT}$.

\begin{figure*}[!h]
	\centering
	\begin{minipage}{0.3\linewidth}
		\centering
		\includegraphics[width=1.00\linewidth]{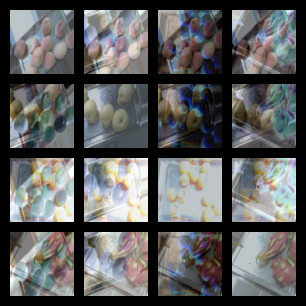}
		\caption*{(a) Fruit-recognition.}
		\label{b1}
	\end{minipage}
	\begin{minipage}{0.3\linewidth}
		\centering
		\includegraphics[width=1.00\linewidth]{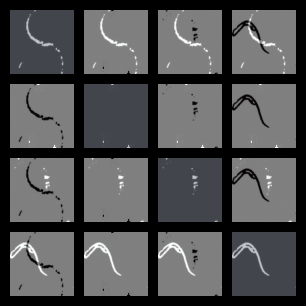}
		\caption*{(b) Numta-BengaliAI.}
		\label{b2}
	\end{minipage}
	\begin{minipage}{0.3\linewidth}
		\centering
		\includegraphics[width=1.00\linewidth]{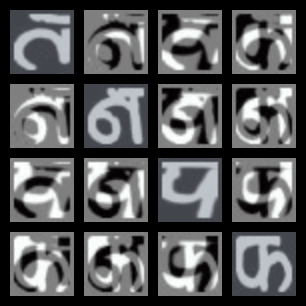}
		\caption*{(c) Devanagari-character.}
		\label{b3}
	\end{minipage}
	
	\caption{Visualization of Boundary decision samples of different datasets.}
	\label{boudec}
\end{figure*}

\subsection{Detailed Analysis of Model Knowledge Representation}

\begin{figure*}[!h]
	\centering
	
	\begin{minipage}{0.95\linewidth}
		\centering
		\includegraphics[width=0.24\linewidth]{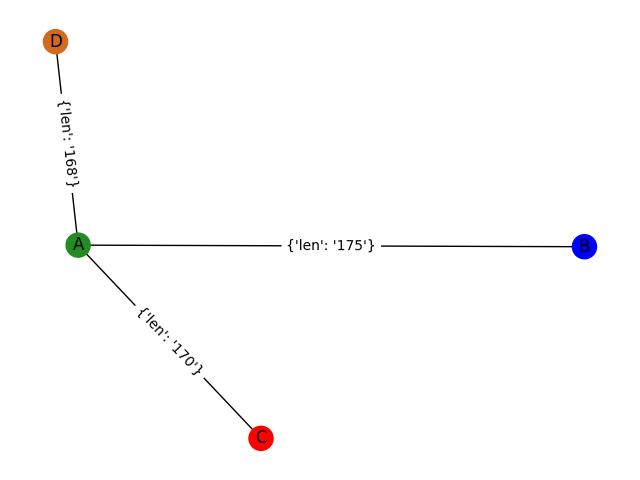}
		\centering
		\includegraphics[width=0.24\linewidth]{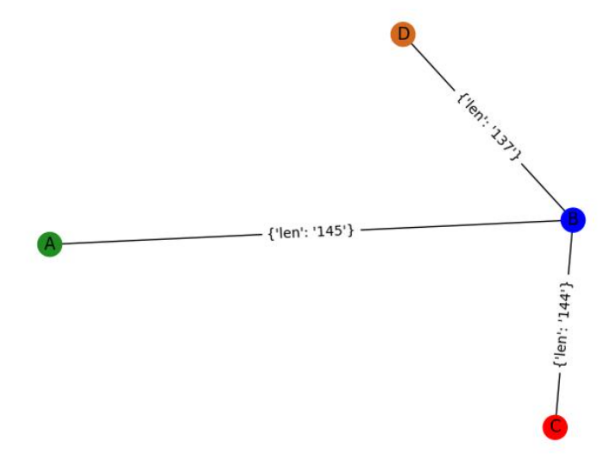}
		\centering
		\includegraphics[width=0.24\linewidth]{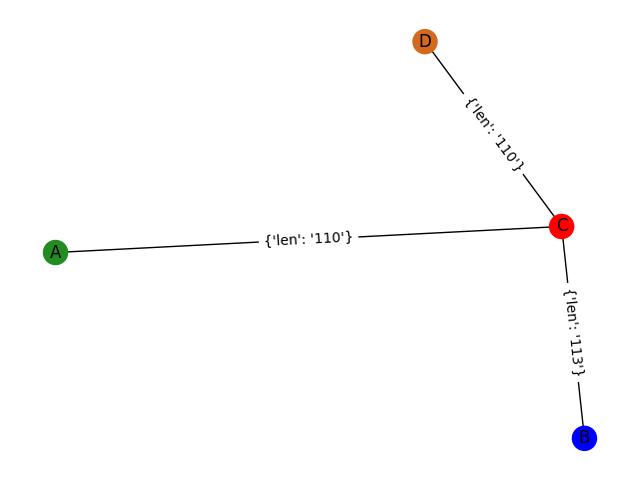}
		\centering
		\includegraphics[width=0.24\linewidth]{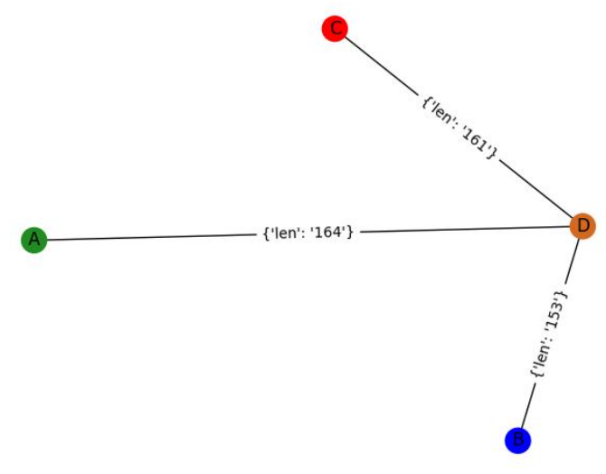}
		\caption*{(a) Simpsons-challenge Classification.}
		\label{vis1}
	\end{minipage}
	
	\begin{minipage}{0.9\linewidth}
		\centering
		\includegraphics[width=0.24\linewidth]{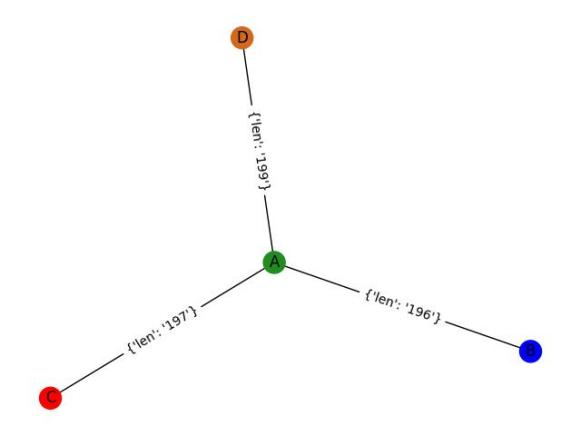}
		\centering
		\includegraphics[width=0.24\linewidth]{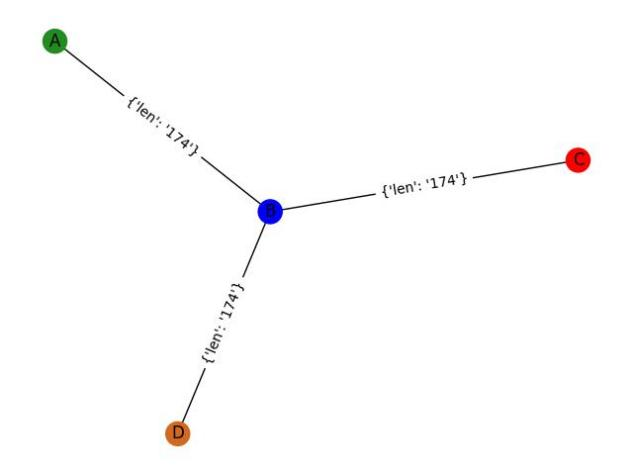}
		\centering
		\includegraphics[width=0.24\linewidth]{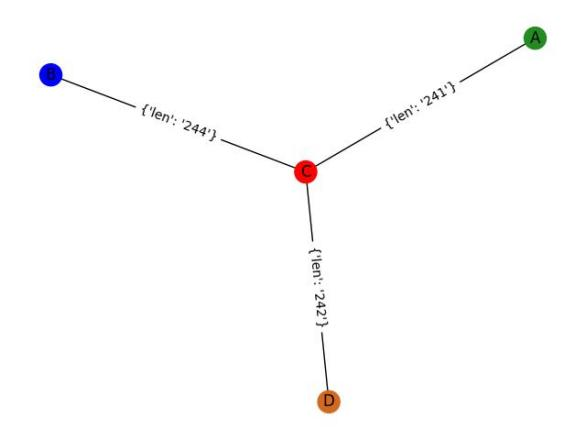}
		\centering
		\includegraphics[width=0.24\linewidth]{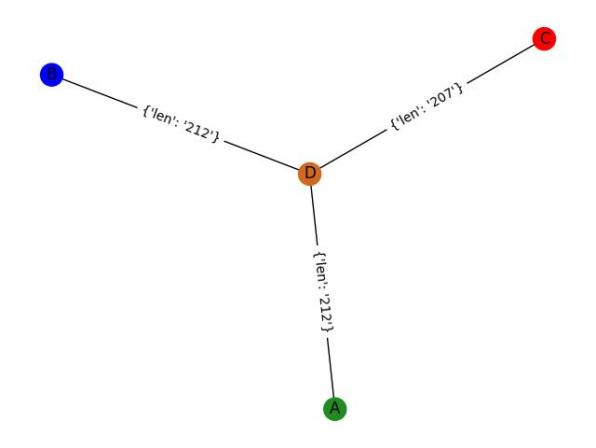}
		\caption*{(b) Devanagari-character Classification.}
		\label{vis2}
	\end{minipage}
	
	\begin{minipage}{0.9\linewidth}
		\centering
		\includegraphics[width=0.24\linewidth]{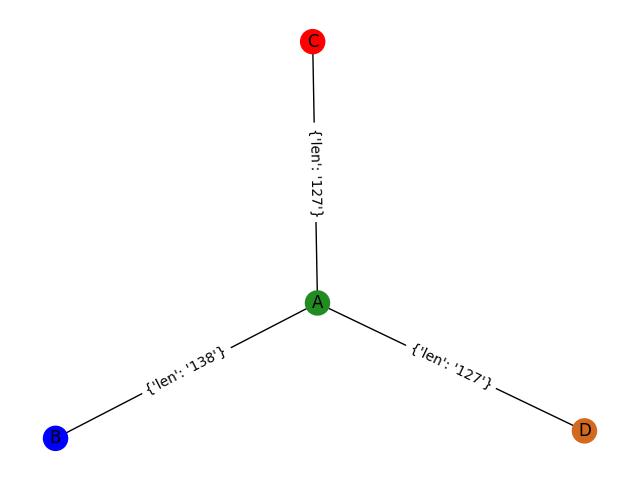}
		\centering
		\includegraphics[width=0.24\linewidth]{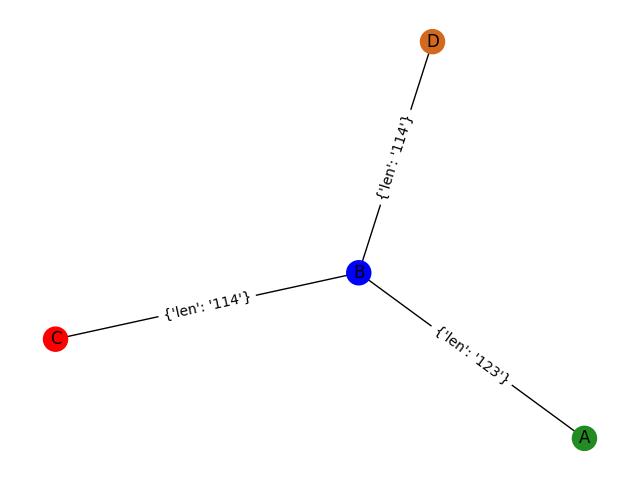}
		\centering
		\includegraphics[width=0.24\linewidth]{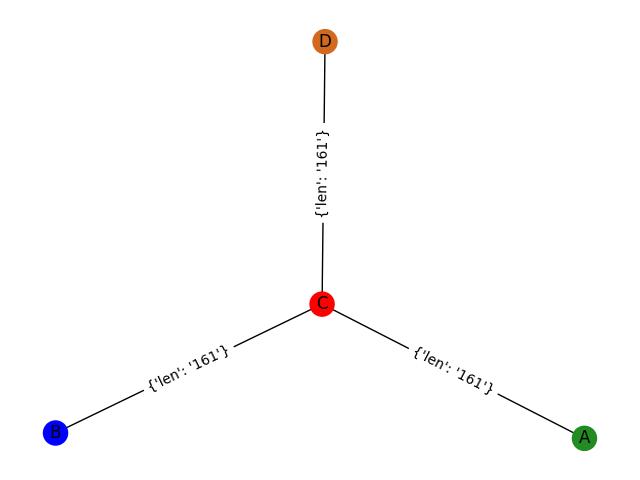}
		\includegraphics[width=0.24\linewidth]{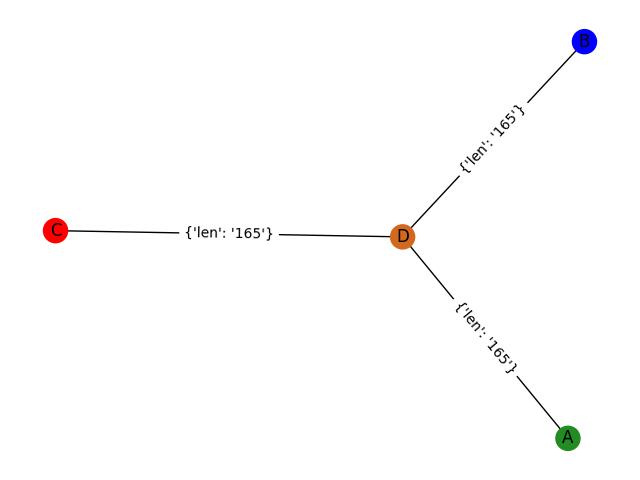}
		\caption*{(c) Intel-image Classification.}
	\end{minipage}
	\caption{Visualization of model knowledge graph sets. From top to bottom are the knowledge graph sets of three models. From left to right, there are sub-graphs of different categories. }
	\label{graphset}
\end{figure*}

\subsubsection{Visualization Explanation of Boundary Decision Samples}
We employ randomly selected training samples from the respective models as probe samples to vividly demonstrate the significance of boundary samples. Fig. \ref{boudec} offers a partial visualization, positioning the probe samples along the main diagonal, which shows the most distinctive characters of each category. Additional images depict the  decision boundary samples from the class indicated on the horizontal axis, the origin class, to the class on the vertical axis, the target class. Figs. \ref{boudec} (a), (b), and (c) showcase the visualization results for   ``Fruit-recognition" model,  ``Numta-BengaliAI classification" model and ``Devanagari-character classification" model, respectively.

In Fig. \ref{boudec} (a), sub-images in 2nd, 3rd, and 4th rows of the first column highlight the contours of peaches in the first row, signifying their distinctive features within the first category.  Similar patterns are observed in other images, emphasize that boundary samples encapsulate the quintessential traits of both the source and target categories.
The same conclusion can be drawn from Figs. \ref{boudec} (b) and (c), where every boundary sample captures the most prominent attribute of the original sample in white, while the black characters represent the most distinctive traits of the target category.

\subsubsection{Visualization Explanation of Model Knowledge Graph Set.}
We visualized the graph sets of model knowledge that defined as $G^\Phi=\{G_a,G_b,G_c,G_d\}$.
The graphs in Fig. \ref{graphset} correspond to the outcomes of Models  ``Simpsons-challenge-g03", ``Devanagari-character7", and ``Intel-image-classifi0", respectively.  For each classification model, we calculated the L2 norm of the transfer vectors $\mathbf{r}$ within the graph.   Moving from left to right, represents the connection  in $G_a,G_b,G_c,G_d$ respectively. For instance, in the image positioned at the leftmost of Fig. \ref{graphset}(a), the distance between the green segment $A$ and the red segment $C$ is 170, which measures the L2 norm of $\mathbf{r_a^c}$ being 170. 


\subsubsection{Implementation of External Probe Datasets} In our NNR experiments, the probe dataset is sourced from the Know2Vec training set. We select images randomly to feed into the target model, and the resulting diverse category samples form our probe dataset. For boundary sample generation, we require one sample from each of the source and target categories. Details of this method can be found in reference \cite{KR}.
In the SF-MTE experiment, we draw our probe dataset from several well-known datasets:
OfficeHome \cite{officehome}, CUB2011 \cite{cub2011}, VLCS \cite{vlcs}, ImageNet \cite{imagenet}. Despite the constraints imposed by the limited size of the probe dataset, which allows us to collect probe samples from only a subset of categories for each model, including boundary samples.  We believe that even with limited model knowledge, we can achieve effective model retrieval, as supported by our main text results.


\subsection{Detailed Analysis of Influence Factors in Model Representation}
We conducted ablation studies to examine the performance of Know2Vec with varying numbers of pre-trained models (PTMs). By selecting a dynamic number of models, we assessed Know2Vec's effectiveness on the Aircraft and DTD datasets, as measured by Pearson and Spearman correlation coefficients.

As shown in Table \ref{dmte}, Know2Vec consistently showed a high level of performance as the size of model zoo increased from 10 to 40. Generally, the addition of more models led to a slight decrease in Know2Vec's performance, reflecting the increased challenge in model selection. Notably, on the Aircraft dataset, Know2Vec's performance improved when the model library size exceeded form 20 models to 30 models, suggesting that the initial subset may have lacked some models on which Know2Vec excels. This variability in performance is likely attributed to the composition of our training dataset.

\begin{table}[h]
	\centering
	\begin{tabular}{c|c|c|c|c} 
		\hline
		$num$&\multicolumn{2}{c}{Aircraft}&\multicolumn{2}{c}{DTD}\\
		\hline
		&Pearson & Spearman&Pearson & Spearman\\
		\hline
		10&0.6128&0.6121 &0.6697 &0.7333\\
		20&0.5739& 0.6092 &0.6512 &0.6400\\
		30& 0.6223 & 0.6902 &0.5762 & 0.5583\\
		40&0.5426& 0.5212& 0.5691& 0.6100\\
		\hline
	\end{tabular}
	\caption{SF-MTE performance with the dynamically models num.}
	\label{dmte}
\end{table}


\section{Acknowledgments}

This work was supported in part by  National Natural Science Foundation of China under grant No. 62371450, Ningbo Natural Science Foundation under contract 2022J189,  and the Cooperation Project Between Chongqing Municipal Undergraduate Universities and Institutes affiliated to Chinese Academy of Sciences under grant HZ2021015. Additionally, this work was supported by the Chinese Academy of Sciences under grant No. XDB0690302.

\bibliography{szy}

\end{document}